\newtheorem{theorem}{Theorem}
\newtheorem{lemma}{Lemma}
\newtheorem{proof}{Proof}
\newtheorem{definition}{Definition}
\newtheorem{remark}{Remark}
\def\Sp{{\scriptsize{\textcircled{{\emph{\tiny{\textbf{Sp}}}}}}}}
\def\QEDclosed{\mbox{\rule[0pt]{1.3ex}{1.3ex}}}
\begin{document}

\title{Rethinking \emph{k}-means from manifold learning perspective}

\author{Quanxue~Gao,  Qianqian~Wang, Han~Lu, Wei~Xia, and~Xinbo~Gao,~\IEEEmembership{Senior Member,~IEEE}
\IEEEcompsocitemizethanks{
\IEEEcompsocthanksitem This work is supported in part by National Natural Science Foundation of China under Grants 62176203; in part by the Open Project Program of State Key Laboratory of Multimodal Artificial Intelligence Systems under Grant 202200035; in part by Natural Science Basic Research Plan in Shaanxi Province (Grant 2020JZ-19), and in part by the Fundamental Research Funds for the Central Universities and the Innovation Fund of Xidian University. (Corresponding author: Quanxue Gao.)\protect

\IEEEcompsocthanksitem Q. Gao, Q. Wang, H. Lu and W. Xia are with the State Key laboratory of Integrated Services Networks, Xidian University, Xi'an 710071, China.\protect

\IEEEcompsocthanksitem X. Gao is with the Chongqing Key Laboratory of Image Cognition, Chongqing University of Posts and Telecommunications, Chongqing 400065, China.\protect}%
}

\markboth{Preprint of IEEE TRANSACTIONS}%
{Shell \MakeLowercase{\textit{Xia et al.}}: Multi-View Clustering via Semi-non-negative Tensor Factorization}

\IEEEtitleabstractindextext{%
\begin{abstract}
Although numerous clustering algorithms have been developed, many existing methods still leverage \emph{k}-means technique to detect clusters of data points. However, the performance of \emph{k}-means heavily depends on the estimation of centers of clusters, which is very difficult to achieve an optimal solution. Another major drawback is that it is sensitive to noise and outlier data. In this paper, from manifold learning perspective, we rethink \emph{k}-means and present a new clustering algorithm which directly detects clusters of data without mean estimation. Specifically, we construct distance matrix between data points by Butterworth filter such that distance between any two data points in the same clusters equals to a small constant, while increasing the distance between other data pairs from different clusters. To well exploit the complementary information embedded in different views, we leverage the tensor Schatten \emph{p}-norm regularization on the 3rd-order tensor which consists of indicator matrices of different views. Finally, an efficient alternating algorithm is derived to optimize our model. The constructed sequence was proved to converge to the stationary KKT point. Extensive experimental results indicate the superiority of our proposed method.
\end{abstract}

\begin{IEEEkeywords}
Multi-view clustering, tensor Schatten \emph{p}-norm, non-negative matrix factorization.
\end{IEEEkeywords}}

\maketitle
\IEEEdisplaynontitleabstractindextext
\IEEEpeerreviewmaketitle

\IEEEraisesectionheading{\section{Introduction}\label{sec:introduction}}
\IEEEPARstart{C}{lustering}, which belongs to unsupervised learning, has been an important and active topic in pattern analysis and artificial intelligence due to the mass of unlabeled data in real applications. Unlike supervised learning, clustering aims to divide data into $C$ non-overlapping clusters according to certain properties such that the data within the same cluster are more similar to each other than the ones in different clusters under the characteristics embedded in data points, where $C$ is the number of clusters. During the past few decades, a substantial amount of clustering algorithms have been proposed to annotate data and achieve impressive performances. However, numerous existing methods still leverage \emph{k}-means to obtain the clusters of data.

\emph{k}-means~\cite{hartigan1979algorithm} directly divides data points into $C$ clusters according to the Euclidean distances between data points and centroid of $C$ clusters which are unknown. Although it has been widely used, one of the most important disadvantages is that it cannot well treat nonlinearly separable data due to the fact that Euclidean distance cannot well characterize the intrinsic geometric structure of clusters. To alleviate this problem, the last decades has witnessed a fast paced development of \emph{k}-means variants. One of the most popular strategies is to map the original data into a high-dimensional reproducing Kernel Hilbert Space, i.e., feature space by using kernel trick such that the embedded features are linearly separable and then use \emph{k}-means to obtain labels of data. Inspired by this, kernel \emph{k}-means and many variants have been developed by selecting different kernel functions to improve clustering performance. Despite their different motivations and impressive experimental results in the literature, all of them need to directly estimate the centroid of clusters which heavily depends on the initialization of centroid matrix of clusters which is difficult in real applications, resulting in unstableness of algorithms.

 To avoid estimation of centroid matrix, Pei et al. ~\cite{Pei0WL20} presented $k$-sum by leveraging the relationship between spectral clustering and \emph{K}-means~\cite{KKSCN2004}. However, \emph{k}-sum is only suitable for single view clustering. Applied \emph{K}-means to multi-view clustering, many efficient methods have been proposed, but, the performance of existing multi-view \emph{k}-means still heavily depends on the initialization of centroid matrix which is difficult to select a suitable value in real applications. This reduces the flexibility of these algorithms. Finally, existing multi-view \emph{k}-means cannot well exploit the complementary information embedded in views. To handle the aforementioned limitations, from manifold learning perspective, we rewrite \emph{k}-means and propose a novel clustering algorithm which directly estimates the clusters of data without estimation of centroid matrix. To our best knowledge, the proposed model is the first work to rethink \emph{k}-means from manifold learning perspective and make it sense. To be specific, existing \emph{k}-means methods differ from the problem addressed in our paper in three ways:
\begin{itemize}
\item Our method builds a bridge between \emph{k}-means and manifold learning, which helps integrate manifold learning and clustering into a uniform framework.
\item Our method directly achieves the clusters of data without estimation of centroid matrix, while existing multi-view \emph{k}-means and variants don't.
\item We use butterworth filter to map distance between data points in the original space into a hidden space such that the distance between the points in the same neighborhood is small and approximately equal, and distance between points in different neighborhoods are very large. Thus, our method can well handle non-linearly separable data.
\item For multi-viewing clustering, existing \emph{k}-means and variants assume that label indicator matrices of all views are identical and do not consider the affect of different views for clustering. In our method, we relax this strict constraint and employ tensor Schatten $p$-norm minimization regularization on the third tensor which is composed of indicator labels of views. Thus, our method well exploits the complementary information embedded in views.
\item We develop an optimization algorithm for the proposed method and prove it always converges to the KKT stationary point mathematically.
\end{itemize}

\section{Related work}\label{Related work}

With the fast paced development of sensor technology, multi-view data have become the mainstream of object representation in real applications and help provide some complementary information which are important for clustering. Inspired by it, many multi-view clustering methods have been proposed during the last decades, two of the most representative techniques are subspace clustering and spectral clustering. Subspace clustering aims to find a low-dimensional subspace or embedding that fits each cluster of points. One of the most representative techniques is low-rank representation technique~\cite{2013Robust} which views original samples as dictionary to learn affinity matrix. Applied it to multi-view clustering, Luo \emph{et al}. proposed CSMSC~\cite{Luo2018Consistent}, which splits multi-view data into view consensus and view-specific by minimizing least square error. To well exploit complementary information, tensor singular value decomposition (t-SVD) based tensor nuclear norm has received much attention for multi-view clustering~\cite{XieTZLZQ18,ZhangFWLCH20}. Inspired by tensor nuclear norm, Gao et al. proposed an enhanced weighted tensor Schatten \emph{p}-norm minimization, which can well approximates the object rank and characterizes the complementary relationship between views~\cite{GaoZXXGT21}, and then presented novel multi-view subspace clustering methods~\cite{XiaZGSHG21,tcyb/XiaGWG22}. All of them have achieved good clustering performance in some scenarios, but all of them almost need post-processing such as spectral clustering to get discrete label matrix of data. This results in sub-optimal performance.

Spectral clustering converts data clustering into graph segmentation and mainly contains the following two steps: (1) Construct similarity matrix $\bf {S}$ which well depicts the structure of data and calculate Laplacian matrix of $\bf {S}$; (2) Perform eigen-decomposition on the Laplacian matrix to achieve spectral embedding for clustering~\cite{NgJW01,pami/BaiLZ23,KumarRD11}. Due to the good clustering accuracy on arbitrary shaped data and well defined mathematics, spectral clustering has become one of the most popular techniques for multi-view clustering~\cite{pami/XiaGWGDT23,WuLZ19}. To further reduce the computational complexity which is caused by similarity matrix construction and eigen-decomposition of large laplacian matrix, many fast spectral clustering methods have been presented, one of the most efficient strategies is anchor-based spectral clustering. To improve the quality of $S$ and well exploit complementary information between views, many spectral clustering based methods have been presented for both single view and multi-view clustering by different graph learning strategies such as self-weighted graph learning~\cite{ZhanZGW18} and low-rank tensor constrained spectral clustering (LTCPSC)~\cite{XuZXGG20}.

Although the motivations of existing clustering methods are different, many methods need to leverage \emph{k}-means to get clusters of data. \emph{k}-means achieves clusters of single view data by Euclidean distance between data points and centers. Applied \emph{k}-means to multi-view clustering, Bick and Scheffer proposed naive multi-view K-means~\cite{2004MultiBS}. To improve robustness of algorithm to noise and outlier, Cai et al.~\cite{2013MultiR} proposed robust multi-view \emph{K}-means (RMKMC) which leverages L21-norm instead of Euclidean distance to measure residual error. Han et al.~\cite{HanXNL22} proposed MVASM which takes into account the difference between views for clustering and exploits the view-consensus information embedded in views. Although the aforementioned methods can obtain impressive clustering performance, all of them directly obtains clusters of data in the original data space. This will reduce the flexibility of algorithm for high dimensional data due to the fact that the distance among data points can't appropriately characterize the relationship between them~\cite{1999When}.

To improve clustering accuracy for high-dimensional data which is ubiquitous in real applications, dimension reduction technique such as PCA is used to obtain low-dimensional representation and then \emph{k}-means is performed to get clusters~\cite{2002PatternC}. To get an optimal subspace which is better for clustering, a reasonable strategy is to simultaneously both learn subspace and realize clustering~\cite{icml/DingL07,DingHZS02}. For example, Ding et al.~\cite{icml/DingL07} proposed an adaptive dimension reduction approach that integrates linear discriminant analysis (LDA) into \emph{k}-means processing for single view clustering. Inspired by this efficient work, xu et al. ~\cite{xu2016discriminatively} extend it to multi-view clustering and proposed discriminatively embedded \emph{k}-means which integrates LDA into multi-view \emph{k}-means. To improve robustness of algorithm to outliers and noise, Xu et al.~\cite{XuHNL17} proposed a robust projected multi-view \emph{k}-means which is called re-weighted discriminatively embedded \emph{k}-means (RDEKM). RDEKM takes into account the difference between views and simultaneously learns projection matrix and clusters of data by \emph{k}-means in the low-dimensional space.


However, the aforementioned clustering methods can't well capture nonlinear structure embedded in data which is important for clustering. To solve this limitation, kernel trick is one of the most efficient methods. Inspired by it, kernel \emph{k}-means is proposed by using a kernel function, which maps data into a embedded high-dimensional space such that the embedded features are linearly separable~\cite{Sch1998Nonlinear}. However, it is difficult to select a suitable kernel function for data and single kernel can't well exploit the complementary information embedded in heterogeneous features. To alleviate this problem, multiple kernel strategy are with great potential to integrate complementary information into \emph{k}-means for improving clustering accuracy~\cite{2012OptimizedTPAMI,2020MultipleLX}.
 Liu et al.~\cite{2016MKKCMIR} proposed Multiple kernel \emph{k}-means with matrix-induced regularization which well considers the correlation among kernel functions. However, the aforementioned methods still need discretization procedure to get discrete label matrix. This results in sub-optimal clustering accuracy.

 Although the above \emph{k}-means variants can achieve impressive clustering performance, the performance of them heavily depends on initialization of centroid matrix of clusters. In real applications, it is difficult to select a suitable centroid matrix for clustering. Thus, the aforementioned clustering method are unstable and inflexible. To alleviate this problem, \emph{k}-means++ is proposed by using a reasonable strategy to initialize centroid matrix. Another an efficient method is \emph{k}-multiple means~\cite{2019K-mmeans}. It simultaneously learns centroid of $m$ sub-clusters, which is larger than the number of classes, and similarity matrix with connected components constraint. But all of them still involve the estimation of centroid matrix of clusters, which is difficult in real applications. To avoid centroid estimation, Pei et al. ~\cite{Pei0WL20} presented $k$-sum by leveraging the relationship between spectral clustering and \emph{K}-means~\cite{KKSCN2004}. However, \emph{k}-sum and \emph{k}-multiple means are suitable for single view clustering. To apply them to multi-view clustering, they can't well exploit the complementary information embedded in views. Different from existing work, our proposed model rethinks \emph{k}-means from manifold learning perspective and provides an efficient model which helps make \emph{k}-means sense. Moreover, our work avoids estimation of centroid matrix and well exploits the complementary information of views.


For convenience, we introduce the notations used throughout the paper. We use
bold calligraphy letters for third-order tensors, ${\bm{\mathcal {M}}} \in{\mathbb{R}} {^{{n_1} \times {n_2} \times {n_3}}}$, bold upper case letters for matrices, ${\bf{M}}$, bold lower case letters for vectors, ${\bf{m}}$, and lower case letters such as ${m_{ijk}}$ for the entries of ${\bm{\mathcal {M}}}$. Moreover, the $i$-th frontal slice of ${\bm{\mathcal {M}}}$ is ${\bm{\mathcal {M}}}^{(i)}$. $\overline {{\bm{\mathcal {M}}}}$ is the discrete Fourier transform (DFT) of ${\bm{\mathcal {M}}}$ along the third dimension, $\overline {{\bm{\mathcal {M}}}} = \mathrm{fft}({{\bm{\mathcal M}}},[ ],3)$. Thus, $\bm{{\mathcal M}} = \mathrm{ifft}({\overline {\bm{\mathcal M}}},[ ],3)$. The trace of matrix $\mathbf{M}$ is expressed as $tr(\mathbf{M})$. The Frobenius norm of ${\bm{\mathcal M}}$ is defined as ${\left\| {\bm{\mathcal M}}\right\|_F} = \sqrt {\sum\nolimits_{i,j,k} {{{\left| {{m_{ijk}}} \right|}^2}} }$.

\section{Rethinking for K-Means}

Suppose ${\mathbf{X}}^{(v)}=[\mathbf{x}_1^{(v)}\textrm{, }\ldots\textrm{ , }\mathbf{x}_N^{(v)}]^\textrm{T}\in \mathbb{R}^{N\times d_v}$ denote the data matrix of the $v$-th view ($v=1,2,\cdots,V$), where $N$ and $V$ are the number of samples and views, respectively, $d_v$ is the feature dimension of the $v$-view. The label matrix is denoted by $\mathbf{Y}=[\mathbf{y}_1\textrm{, }\ldots\textrm{ , }\mathbf{y}_N]^\textrm{T}\in \left\{0,1\right\}^{N\times C}$, where $\mathbf{y}_i$ is indicator vector of the $i$-th sample; $y_{ij}=\textrm{1}$ if $\mathbf{x}_i$ belongs to the $j$-th cluster; $y_{ij}=\textrm{0}$, otherwise. $C$ is the cluster number of data.

\emph{K}-means aims to partition data points $\mathbf{x}_1^{(v)}, \mathbf{x}_2^{(v)}, \cdots, \mathbf{x}_N^{(v)}$ into $C$ clusters according to the Euclidean distance between the data points and the centroid points $\mathbf{u}_1^{
 (v)}, \mathbf{u}_2^{(v)},\cdots,\mathbf{u}_C^{(v)}$ such that the data points in the same clusters are as close as possible and the distances between different clusters are as large as possible.
Objective function of multi-view $K$-means can be reformulated as~\cite{2004MultiBS}
\begin{equation}\label{Mk-means-indicator}
	\begin{aligned}
\mathop {\min }\limits_{{{\bf{M}}^{(v)}},{\bf{Y}}} \sum\limits_v {\left\| {{{\bf{X}}^{(v)}} - {{\bf{M}}^{(v)}}{{\bf{Y}}^T}} \right\|_F^2}
	\end{aligned}
\end{equation}
where ${\bf{M}}^{(v)} = [{\bf{u}}_1^{(v)},{\bf{u}}_2^{(v)},\cdots,{\bf{u}}_C^{(v)}]$ is centroid matrix of the $v$-th view, $\bf{u}_j^{(v)}$ denotes the centroid, i.e. mean vector of the $j$-th cluster of the $v$-th view.


For the model (\ref{Mk-means-indicator}), we have the following theorem.
\begin{theorem}\label{theorem-gao-1}
Denoted by ${A}_1\textrm{, }\ldots\textrm{ , }{A}_C$ the clusters of multi-view data ${\bf{X}^{(v)}}$, $v=1,\cdots, V$, ${\bf{u}}_j^{(v)}$ is the centroid of the $j$-th cluster of the $v$-th view, $N_v=|A_{v}|$ is the number of samples in $A_{v}$, then
\begin{equation}\label{gao-th1-1}
 \begin{array}{l}
\mathop {\min }\limits_{{{\bf{M}}^{(v)}},{\bf{Y}}} \sum\limits_v {\left\| {{{\bf{X}}^{(v)}} - {{\bf{M}}^{(v)}}{{\bf{Y}}^T}} \right\|_F^2}  = \\
\ \ \ \ \ \ \ \ \ \ \ \ \mathop {\min }\limits_{{A_1},{A_1},...,{A_c}} \sum\limits_v {\sum\limits_{i,l} {\left\| {{\bf{x}}_i^{(v)} - {\bf{x}}_l^{(v)}} \right\|_F^2{S_{il}}} }
\end{array}
\end{equation}
where
\begin{equation}\label{gao-th1-2}
{S_{il}} = \left\{ {\begin{array}{*{20}{l}}
{\left\langle {{{\bf{G}}^{{i}}},{{\bf{G}}^{{l}}}} \right\rangle ,\begin{array}{*{20}{c}}
{}
\end{array}{\bf{x}}_i^{(v)} \in {A_j}  \ and \ {\bf{x}}_l^{(v)} \in {A_j} }\\
{0,\begin{array}{*{20}{c}}
{}&{}
\end{array}otherwise}
\end{array}} \right.
\end{equation}
and
\begin{equation}\label{gao-th1-3}
{\bf{G}} = \left\{ \begin{array}{l}
{{\bf{Y}}} ,\begin{array}{*{20}{c}}
{}
\end{array} \ \ \ \ \ \ \ \ \ \ \ \ \ \ N_1 = N_2 = \cdots = N_c \\
{\bf{Y}}{({\bf{Y}^T}{\bf{Y}})^{ - {\textstyle{1 \over 2}}}},\begin{array}{*{20}{c}}
{}&{}
\end{array}otherwise
\end{array} \right.
\end{equation}
\end{theorem}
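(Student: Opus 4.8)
The plan is to peel the problem apart in the natural order: first eliminate the centroid matrix $\mathbf{M}^{(v)}$ by partial minimization, then convert the resulting within-cluster scatter into a sum of pairwise squared distances, and finally read off the weight $S_{il}$ as an inner product of rows of $\mathbf{G}$. The two cases for $\mathbf{G}$ should emerge at the very last step, from how the cluster-size factor $1/N_j$ behaves under balanced versus unbalanced partitions.

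First I would fix the assignment $\mathbf{Y}$ (equivalently, fix the partition $A_1,\ldots,A_C$) and minimize $\sum_v\|\mathbf{X}^{(v)}-\mathbf{M}^{(v)}\mathbf{Y}^T\|_F^2$ over $\mathbf{M}^{(v)}$ alone. Since the reconstruction simply assigns to each sample the centroid of the cluster it is routed to, this is a separable least-squares problem whose stationarity condition ($\partial/\partial\mathbf{M}^{(v)}=0$) forces each column $\mathbf{u}_j^{(v)}$ to equal the cluster mean $\frac{1}{N_j}\sum_{\mathbf{x}_i^{(v)}\in A_j}\mathbf{x}_i^{(v)}$. Substituting back collapses the objective to the within-cluster scatter $\sum_v\sum_{j}\sum_{\mathbf{x}_i^{(v)}\in A_j}\|\mathbf{x}_i^{(v)}-\mathbf{u}_j^{(v)}\|^2$, and the remaining minimization over indicator matrices $\mathbf{Y}$ is exactly the minimization over partitions $A_1,\ldots,A_C$ appearing on the right-hand side.

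The second step is the algebraic identity that rewrites scatter-to-the-mean as averaged pairwise distances. For each cluster I would substitute $\mathbf{u}_j^{(v)}=\frac1{N_j}\sum_{\mathbf{x}_l^{(v)}\in A_j}\mathbf{x}_l^{(v)}$ into $\sum_{i\in A_j}\|\mathbf{x}_i^{(v)}-\mathbf{u}_j^{(v)}\|^2$, expand the squares, and collect the cross terms to obtain the standard relation $\sum_{i\in A_j}\|\mathbf{x}_i^{(v)}-\mathbf{u}_j^{(v)}\|^2=\frac{1}{2N_j}\sum_{i,l\in A_j}\|\mathbf{x}_i^{(v)}-\mathbf{x}_l^{(v)}\|^2$. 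Summing over clusters and views then turns the objective into $\sum_v\sum_{i,l}\|\mathbf{x}_i^{(v)}-\mathbf{x}_l^{(v)}\|^2 S_{il}$, where $S_{il}$ is supported on same-cluster pairs and proportional to $1/N_j$ there — precisely the claimed form, provided the weight can be matched to $\langle\mathbf{G}^i,\mathbf{G}^l\rangle$.

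Finally I would verify the weight. Writing $\mathbf{Y}^T\mathbf{Y}=\mathrm{diag}(N_1,\ldots,N_C)$, the normalized matrix $\mathbf{G}=\mathbf{Y}(\mathbf{Y}^T\mathbf{Y})^{-1/2}$ has $i$-th row $N_{j}^{-1/2}\mathbf{e}_j$ whenever $\mathbf{x}_i\in A_j$, so $\langle\mathbf{G}^i,\mathbf{G}^l\rangle=1/N_j$ for same-cluster pairs and $0$ otherwise, which reproduces the cluster-dependent weight. When all clusters have equal size, $\mathbf{Y}^T\mathbf{Y}=(N/C)\mathbf{I}$ is a scalar matrix, so $1/N_j$ degenerates to a single global constant; because a positive constant multiple does not move the $\arg\min$ over partitions, the normalization may be discarded and one may simply take $\mathbf{G}=\mathbf{Y}$, for which $\langle\mathbf{G}^i,\mathbf{G}^l\rangle$ reduces to the co-membership indicator. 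I expect the only real friction to be the bookkeeping of these multiplicative constants: tracking the $\tfrac{1}{2N_j}$ from the pairwise identity against the $1/N_j$ produced by $\mathbf{G}$, and arguing cleanly that the asserted "$=$" is an equivalence of the two minimization problems up to a positive, partition-independent constant — so that in the balanced case dropping the $1/N_j$ normalization genuinely preserves the minimizer and the two choices of $\mathbf{G}$ are interchangeable for optimization — rather than any conceptual obstacle.
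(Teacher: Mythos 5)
Your proposal is correct and follows essentially the same route as the paper's proof: fix the partition so the objective becomes the within-cluster scatter with each $\mathbf{u}_j^{(v)}$ the cluster mean, convert that scatter to pairwise squared distances via $\sum_{\mathbf{x}_i^{(v)}\in A_j}\|\mathbf{x}_i^{(v)}-\mathbf{u}_j^{(v)}\|_F^2=\frac{1}{2N_j}\sum_{i,l\in A_j}\|\mathbf{x}_i^{(v)}-\mathbf{x}_l^{(v)}\|_F^2$ (the paper verifies this by expanding both sides to the common form $\sum_i \mathrm{tr}(\mathbf{x}_i^{(v)\mathrm{T}}\mathbf{x}_i^{(v)}-\mathbf{x}_i^{(v)\mathrm{T}}\mathbf{u}_j^{(v)})$, exactly as you do), and then read off the weight $S_{il}$ from the rows of $\mathbf{G}$. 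If anything, your final step is more careful than the paper's, since you explicitly check $\langle\mathbf{G}^i,\mathbf{G}^l\rangle=1/N_j$ and flag the partition-independent factor of $2$ between $\frac{1}{2N_j}$ and $1/N_j$, which the paper passes over silently when it equates the two expressions.
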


\textbf{\emph{Proof:}}

Since ${A}_1\textrm{, }\ldots\textrm{ , }{A}_C$ are the clusters of data, then,
\begin{equation}\label{Mk-means-var}
\begin{array}{l}
\mathop {\min }\limits_{{{\bf{M}}^{(v)}},{\bf{Y}}} \sum\limits_v {\left\| {{{\bf{X}}^{(v)}} - {{\bf{M}}^{(v)}}{{\bf{Y}}^T}} \right\|_F^2} =\\
\ \ \ \ \ \ \ \ \ \ \ \mathop {\min }\limits_{{A_1},{A_1},...,{A_c}} \sum\limits_v {\sum\limits_{{\bf{x}}_i^{(v)} \in {A_j}} {\left\| {{\bf{x}}_i^{(v)} - {\bf{u}}_j^{(v)}} \right\|_F^2} }
\end{array}
\end{equation}


Since all views are independent, thus, for each view such as the $v$-th view, we have
\begin{equation}\label{theorem-gao-p3}
\begin{array}{l}
\sum\limits_{{\bf{x}}_i^{(v)} \in {A_j}} {\left\| {{{\bf{x}}_i^{(v)}} - {\bf{u}}_j^{(v)}} \right\|_F^2}  \\
= \sum\limits_{{\bf{x}}_i^{(v)} \in {A_j}} {tr} ({{\bf{x}}_i^{(v)}}^T{{\bf{x}}_i^{(v)}} - 2{{\bf{x}}_i^{(v)}}^T{{\bf{u}}_j^{(v)}} + {{{\bf{u}}_j^{(v)}}^T}{{\bf{u}}_j^{(v)}})\\
 = \sum\limits_{{\bf{x}}_i^{(v)} \in {A_j}} {tr} ({{\bf{x}}_i^{(v)}}^T{{\bf{x}}_i^{(v)}} - {{\bf{x}}_i^{(v)}}^T{{\bf{u}}_j^{(v)}})
\end{array}
\end{equation}
and
\begin{equation}\label{theorem-gao-p4}
 \begin{array}{l}
\frac{1}{{2N_{v}}}\sum\limits_{{\bf{x}}_i^{(v)} \in {A_j}} {\sum\limits_{{\bf{x}}_l^{(v)} \in {A_j}} {\left\| {{{\bf{x}}_i^{(v)}} - {{\bf{x}}_l^{(v)}}} \right\|_F^2} }  \\
= \frac{1}{{2N_{v}}}\sum\limits_{{\bf{x}}_i^{(v)} \in {A_j}} {\sum\limits_{{\bf{x}}_l^{(v)} \in {A_j}} {tr} } (2{{\bf{x}}_i^{(v)}}^T{{\bf{x}}_i^{(v)}} - 2{{\bf{x}}_i^{(v)}}^T{{\bf{x}}_l^{(v)}})\\
 = \frac{1}{{2N_{v}}}\sum\limits_{{\bf{x}}_i^{(v)} \in {A_j}} {tr} (2n{{\bf{x}}_i^{(v)}}^T{{\bf{x}}_i^{(v)}} - 2n{{\bf{x}}_i^{(v)}}^T{{\bf{u}}_j^{(v)}})\\
 = \sum\limits_{{\bf{x}}_i^{(v)} \in {A_j}} {tr} ({{\bf{x}}_i^{(v)}}^T{{\bf{x}}_i^{(v)}} - {{\bf{x}}_i^{(v)}}^T{\bf{u}}_j^{(v)})
\end{array}
\end{equation}




According to Eq. (\ref{theorem-gao-p3}) and Eq. (\ref{theorem-gao-p4}), we have
\begin{equation}\label{7}
\begin{array}{l}
\sum\limits_{{\bf{x}}_i^{(v)} \in {A_j}} {\left\| {{{\bf{x}}_i^{(v)}} - {\bf{u}}_j^{(v)}} \right\|_F^2}  \\
\ \ \ \ \ \ \ \ \ \ \ \ \ = \frac{1}{{2N_{v}}}\sum\limits_{{{\bf{x}}_i^{(v)} \in {A_j}},{{\bf{x}}_l^{(v)} \in {A_j}}} {\left\| {{{\bf{x}}_i}^{(v)} - {\bf{x}}_l^{(v)}} \right\|_F^2}\\
\ \ \ \ \ \ \ \ \ \ \ \ \ = \sum\limits_{{{\bf{x}}_i^{(v)}},{{\bf{x}}_l^{(v)} }} {\left\| {{{\bf{x}}_i}^{(v)} - {\bf{x}}_l^{(v)}} \right\|_F^2{S_{il}}}
\end{array}
\end{equation}

Since all views are independent, thus we have
\begin{equation}\label{8}
\sum\limits_v {\left\| {{{\bf{X}}^{(v)}} - {{\bf{M}}^{(v)}}{{\bf{Y}}^T}} \right\|_F^2}
 = \sum\limits_v {\sum\limits_{i,l} {\left\| {{{\bf{x}}_i}^{(v)} - {\bf{x}}_l^{(v)}} \right\|_F^2{S_{il}}}}
\end{equation}

Then, the model (\ref{gao-th1-1}) holds.
\ \ \ \ \ \ \ \ \ \ \ \ \ \ \ \ \ \ \ \ \ \ \ \ \ \ \ \ \ \ \ \ \ \ \ \ \ \ \ \QEDclosed

Theorem \ref{theorem-gao-1} indicates that \emph{k}-means can be viewed as a manifold learning technique.

\begin{theorem}\label{theorem-gao-2}
Denoted by ${{\bf{D}}_v}$ distance matrix between data points of the $v$-th view, the elements ${{\bf{D}}_v}(i,l)$ of ${\bf{D}}_v$ are ${{\bf{D}}_v}(i,l) = \left\| {{\bf{x}}_i^{(v)} - {\bf{x}}_l^{(v)}} \right\|_F^2$, then
\begin{equation}\label{theorem2-gao}
\begin{array}{l}
\mathop {\min }\limits_{{{\bf{M}}^{(v)}},{\bf{Y}}} \sum\limits_v {\left\| {{{\bf{X}}^{(v)}} - {{\bf{M}}^{(v)}}{{\bf{Y}}^T}} \right\|_F^2}
 = \mathop {\min }\limits_{\bf{G} } \sum\limits_v {tr({{\bf{G}}^T}{{\bf{D}}_v}{\bf{G}})}
\end{array}
\end{equation}
\end{theorem}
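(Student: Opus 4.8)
The plan is to obtain Theorem~\ref{theorem-gao-2} as an almost immediate consequence of Theorem~\ref{theorem-gao-1}, so that the only genuinely new content is packaging the weighted pairwise-distance sum as a trace. First I would recall that Theorem~\ref{theorem-gao-1} already establishes
\begin{equation*}
\mathop{\min}\limits_{{\bf{M}}^{(v)},{\bf{Y}}}\sum\limits_v\left\|{\bf{X}}^{(v)}-{\bf{M}}^{(v)}{\bf{Y}}^T\right\|_F^2=\mathop{\min}\limits_{A_1,\ldots,A_C}\sum\limits_v\sum\limits_{i,l}\left\|{\bf{x}}_i^{(v)}-{\bf{x}}_l^{(v)}\right\|_F^2 S_{il}.
\end{equation*}
Since the distance matrix of Theorem~\ref{theorem-gao-2} has entries $({\bf{D}}_v)_{il}=\left\|{\bf{x}}_i^{(v)}-{\bf{x}}_l^{(v)}\right\|_F^2$, the inner double sum is exactly $\sum_{i,l}({\bf{D}}_v)_{il}S_{il}$, i.e. the Frobenius inner product of ${\bf{D}}_v$ with the weight matrix ${\bf{S}}=(S_{il})$. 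Thus it suffices to rewrite this inner product as $tr({\bf{G}}^T{\bf{D}}_v{\bf{G}})$ and to confirm that minimizing over partitions is the same as minimizing over ${\bf{G}}$.

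The crux is the matrix identity ${\bf{S}}={\bf{G}}{\bf{G}}^T$, i.e. $S_{il}=\langle{\bf{G}}^i,{\bf{G}}^l\rangle$ for the rows ${\bf{G}}^i,{\bf{G}}^l$ of ${\bf{G}}$, which I would verify on both branches of (\ref{gao-th1-3}). When the clusters are balanced and ${\bf{G}}={\bf{Y}}$, each row is a one-hot indicator, so $\langle{\bf{G}}^i,{\bf{G}}^l\rangle$ equals $1$ when ${\bf{x}}_i,{\bf{x}}_l$ share a cluster and $0$ otherwise; when ${\bf{G}}={\bf{Y}}({\bf{Y}}^T{\bf{Y}})^{-1/2}$, the diagonal factor $({\bf{Y}}^T{\bf{Y}})^{-1/2}$ rescales the nonzero row entry of cluster $j$ to $1/\sqrt{N_j}$, so $\langle{\bf{G}}^i,{\bf{G}}^l\rangle=1/N_j$ for a shared cluster $j$ and $0$ otherwise. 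In both cases the ``otherwise $=0$'' clause of (\ref{gao-th1-2}) is automatic, since rows belonging to different clusters have disjoint supports; hence ${\bf{S}}={\bf{G}}{\bf{G}}^T$ exactly. Using the symmetry of ${\bf{D}}_v$ and the cyclic property of the trace, I then get
\begin{equation*}
\sum\limits_{i,l}({\bf{D}}_v)_{il}S_{il}=\sum\limits_{i,l}({\bf{D}}_v)_{il}({\bf{G}}{\bf{G}}^T)_{il}=tr({\bf{D}}_v{\bf{G}}{\bf{G}}^T)=tr({\bf{G}}^T{\bf{D}}_v{\bf{G}}),
\end{equation*}
and summing over $v$ yields the claimed right-hand side.

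Finally I would argue that the feasible sets of the two minimizations coincide: every partition $A_1,\ldots,A_C$ determines ${\bf{Y}}$ and hence a unique ${\bf{G}}$ through (\ref{gao-th1-3}), and conversely each admissible ${\bf{G}}$ (a scaled indicator matrix, orthonormal with ${\bf{G}}^T{\bf{G}}={\bf{I}}$ in the normalized branch) encodes a partition, so $\min_{A_1,\ldots,A_C}$ and $\min_{{\bf{G}}}$ range over the same objective. I expect the main obstacle to be precisely the bookkeeping of the weight matrix ${\bf{S}}$ across the two branches of (\ref{gao-th1-3}): the shared-cluster weight is $1$ in the balanced case but $1/N_j$ in the normalized case, and one must confirm that in each regime it coincides entrywise with $({\bf{G}}{\bf{G}}^T)_{il}$, including the vanishing of all cross-cluster entries. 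Once the identification ${\bf{S}}={\bf{G}}{\bf{G}}^T$ is pinned down, the trace reformulation and the summation over views are routine.
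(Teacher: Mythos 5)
Your proposal is correct and follows essentially the same route as the paper: invoke Theorem~\ref{theorem-gao-1}, identify the weighted pairwise sum $\sum_{i,l}({\bf{D}}_v)_{il}S_{il}$ with $tr({\bf{G}}^T{\bf{D}}_v{\bf{G}})$ via $S_{il}=\langle{\bf{G}}^i,{\bf{G}}^l\rangle$, and sum over views. Your version is in fact tidier than the paper's index manipulation in Eq.~(\ref{theorem2-gao-pr1}), since you make the matrix identity ${\bf{S}}={\bf{G}}{\bf{G}}^T$ (including the vanishing cross-cluster entries on both branches of (\ref{gao-th1-3})) and the feasible-set correspondence explicit, but the underlying argument is the same.
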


\begin{proof}
For each view such as the $v$-th view, we have
\begin{equation}\label{theorem2-gao-pr1}
\begin{array}{l}
\sum\limits_{{{\bf{x}}_i^{(v)}},{{\bf{x}}_l^{(v)} }} {\left\| {{{\bf{x}}_i}^{(v)} - {\bf{x}}_l^{(v)}} \right\|_F^2{S_{il}}} \\
\ \ \ \ \ \ \ \ \ \ \ \ \ \ = \sum\limits_i {\sum\limits_l {tr} } (d_{il}^{(v)}\left\langle {{{\bf{G}}^i},{{\bf{G}}^l}} \right\rangle )\\
\ \ \ \ \ \ \ \ \ \ \ \ \ \  = \sum\limits_i {tr((d_{il}^{(v)}{{\bf{G}}^T}){{\bf{G}}^i})} \\
 \ \ \ \ \ \ \ \ \ \ \ \ \ \ = tr({{\bf{G}}^T}{{\bf{D}}_v}{\bf{G}})
\end{array}
\end{equation}

Since all views are independent, and according to Eq. (\ref{theorem2-gao-pr1}), we have
\begin{equation}\label{theorem2-gao-pr2}
 \begin{array}{l}
\sum\limits_v {\sum\limits_{{{\bf{x}}_i^{(v)}},{{\bf{x}}_l^{(v)}}} {\left\| {{{\bf{x}}_i}^{(v)} - {\bf{x}}_l^{(v)}} \right\|_F^2{S_{il}}} } \\
 \ \ \ \ \ \ \ \ = tr(\sum\limits_v {{{\bf{G}}^T}{{{\bf{D}}_v}}{\bf{G}}}) \\
 \ \ \ \ \ \ \ \ = \sum\limits_v {tr({{\bf{G}}^T}{{\bf{D}}_V}{\bf{G}})}
\end{array}
\end{equation}
then,
\begin{equation}\label{theorem2-gao-pr3}
 \begin{array}{l}
\mathop {\min }\limits_{{A_1},{A_1},...,{A_c}} \sum\limits_v {\sum\limits_{i,l} {\left\| {{\bf{x}}_i^{(v)} - {\bf{x}}_l^{(v)}} \right\|_F^2{S_{il}}} } = \\
\ \ \ \ \ \ \ \ \ \ \ \ \ \ \ \ \ \ \ \ \ \ \ \ \ \ \ \ \ \ \ \ \ \ \ \ \mathop {\min }\limits_{\bf{G} } \sum\limits_v {tr({{\bf{G}}^T}{{\bf{D}}_v}{\bf{G}})}
\end{array}
\end{equation}

According to theorem {\ref{theorem-gao-1}} and Eq. (\ref{theorem2-gao-pr3}), we have

\begin{equation}\label{theorem2-gao-pr4}
\begin{array}{l}
\mathop {\min }\limits_{{{\bf{M}}^{(v)}},{\bf{Y}}} \sum\limits_v {\left\| {{{\bf{X}}^{(v)}} - {{\bf{M}}^{(v)}}{{\bf{Y}}^T}} \right\|_F^2} \\
 \ \ \ \ \ \ \ \ \ = \mathop {\min }\limits_{{A_1},{A_1},...,{A_c}} \sum\limits_v {\sum\limits_{i,l} {\left\| {{\bf{x}}_i^{(v)} - {\bf{x}}_l^{(v)}} \right\|_F^2{S_{il}}} }\\
 \ \ \ \ \ \ \ \ \ = \mathop {\min }\limits_{\bf{G} } \sum\limits_v {tr({{\bf{G}}^T}{{\bf{D}}_v}{\bf{G}})}
\end{array}
\end{equation}
 \ \ \ \ \ \ \ \ \ \ \ \ \ \ \ \ \ \ \ \ \ \ \ \ \ \ \ \ \ \ \ \ \ \ \ \ \ \ \ \ \ \ \ \ \ \ \ \ \ \ \ \ \ \ \ \ \ \ \ \ \ \ \ \ \ \ \ \ \ \ \ \ \ \ \ \ \ \ \ \ \QEDclosed
\end{proof}

\begin{remark}\label{remark1} When $V=1$, Theorem \ref{theorem-gao-2} becomes
\begin{equation}\label{theorem3-gao}
\begin{array}{l}
\mathop {\min }\limits_{{{\bf{M}}},{\bf{Y}}} {\left\| {{{\bf{X}}} - {{\bf{M}}}{{\bf{Y}}^T}} \right\|_F^2}=\\
\mathop {\min }\limits_{{A_1},{A_1},...,{A_c}} {\sum\limits_{i,l} {\left\| {{\bf{x}}_i - {\bf{x}}_l} \right\|_F^2{S_{il}}} }= \mathop {\min }\limits_{\bf{G} } {tr({{\bf{G}}^T}{{\bf{D}}}{\bf{G}})}
\end{array}
\end{equation}
which is standard \emph{k}-means. It indicates that $k$-sum is a special case of our model (\ref{theorem2-gao}).
\end{remark}

\begin{remark}\label{remark1} When $V=1$, Theorem \ref{theorem-gao-2} becomes
Ratio-cut~\cite{HagenK92}, which is one of the most representative spectral clustering models. Ratio-cut aims to get clusters ${A}_1\textrm{, }\ldots\textrm{ , }{A}_c$ by
\begin{equation}\label{Rcut}
	\begin{aligned}
		\mathop {\min }\limits_{{A_1},{A_1},...,{A_c}} {\sum\limits_{l=1}^{c}{\frac{\textrm{cut}\left(\mathbf{A}_l, \overline{\mathbf{A}}_l\right)}{\left|\mathbf{A}_l\right|}}}
	\end{aligned}
\end{equation}
where $\overline{\mathbf{A}}_l$ is the complement of $\mathbf{A}_l$; $\left| \mathbf{A}_l \right|$ denotes the number of samples in $\mathbf{A}_l$, and $\textrm{cut}\left(\mathbf{A}_l, \overline{\mathbf{A}}_l\right)=\sum_{i\in\mathbf{A}_l\textrm{, }j\in\overline{\mathbf{A}}_l}{w_{ij}}$, $w_{ij}$ is the weight between the $i$-th node and $j$-th node.

By simple algebra, the problem (\ref{Rcut}) becomes
\begin{equation}\label{Rcut2}
\quad \mathop {\min }\limits_{\mathbf{Y}\in \bm{\mathcal{I}}} {\mathrm{\emph{tr}}( (\mathbf{Y}^T\mathbf{Y})^{-1/2} \mathbf{Y}^T\mathbf{L}\mathbf{Y} (\mathbf{Y}^T\mathbf{Y})^{-1/2})}
\end{equation}
where $\mathbf{L}$ is Laplacian matrix.

According to the definition of $\bf{G}$ (See Eq. (\ref{gao-th1-3})), we have that the model (\ref{Rcut2}) can be rewritten as
\begin{equation}\label{Rcut2-Km}
\quad \mathop {\min }\limits_{\mathbf{G}} {\mathrm{\emph{tr}}( \mathbf{G}^T\mathbf{L}\mathbf{G} )}
\end{equation}
When ${\bf{L}}$ becomes ${\bf{D}}$, the model (\ref{Rcut2-Km}) becomes the model (\ref{theorem3-gao}). It means that Spectral clustering is also a special case of our model (\ref{theorem2-gao}).
\end{remark}

Combining the aforementioned insight analysis, we have that, from manifold learning perspective, we make \emph{k}-means sense and integrate \emph{k}-means, manifold learning and spectral clustering into a uniform framework.

%

\section{Methodogy}
\subsection{Motivation and Objective}
In reality, different views contain different characteristics of the object, thus, different views should have different similarity matrices, resulting to different label matrices of different views. However, the model (\ref{theorem2-gao}) neglects this, resulting in sub-optimal performance. To further improve the performance, combining the aforementioned insight analysis, we
rewrite the model (\ref{theorem2-gao}) as the following unified framework for multi-view clustering:
\begin{equation}\label{model0}
\begin{aligned}
&\mathop {{\rm{min}}}\limits_{{{\bf{Y}}^{(v)}},{\alpha _{\rm{v}}}} \sum\limits_{v{\rm{ = 1}}}^{{V}} {\alpha _{\rm{v}}^{{r}}} {\rm{tr}}({{\bf{G}}^{(v)T}}{{\bf{D}}^{(v)}}{{\bf{G}}^{(v)}}){\rm{ + }}\lambda \sum\limits_{v{\rm{ = 1}}}^V {\cal R} ({{\bf{Y}}^{(v)}})\\
		&\emph{\textrm{s.t.}} \quad \quad {\textbf{Y}^{\textrm{(\emph{v})}}\in \textrm{Ind}}\textrm{, }\sum\limits_{\textrm{\emph{v}=1}}^\textrm{\emph{V}} {{\alpha_\textrm{\emph{v}}}\textrm{ = 1}} ,{\rm{ }}{\alpha_\textrm{\emph{v}}} \ge {{\textrm{0}}}
	\end{aligned}
\end{equation}
where $\textbf{Y}^{\textrm{(\emph{v})}}$ is the label matrix of the $v$-th view; $\mathcal{R}\textrm{(}\bullet\textrm{)}$ represents the regularizer on ${\mathbf{Y}^{(v)}}$; $\alpha_v^r$ is the adaptive weight for the $v$-th view, which characterizes the importance of the $v$-th view, $V$ is the number of views, $\lambda$ is a trade-off parameter, and
\begin{equation}\label{gao-th1-33}
{\bf{G}}^{(v)} = \left\{ \begin{array}{l}
{{\bf{Y}}}^{(v)} ,\begin{array}{*{20}{c}}
{}
\end{array} \ \ \ \ \ \ \ \ \ \ \ \ \ \ N_1 = N_2 = \cdots = N_c \\
{\bf{Y}}^{(v)}{({{\bf{Y}}^{(v)T}}{\bf{Y}}^{(v)})^{ - {\textstyle{1 \over 2}}}},\begin{array}{*{20}{c}}
{}&{}
\end{array}otherwise
\end{array} \right.
\end{equation}

\begin{figure}[!t]
	\centering
	\includegraphics[width=1.0\linewidth]{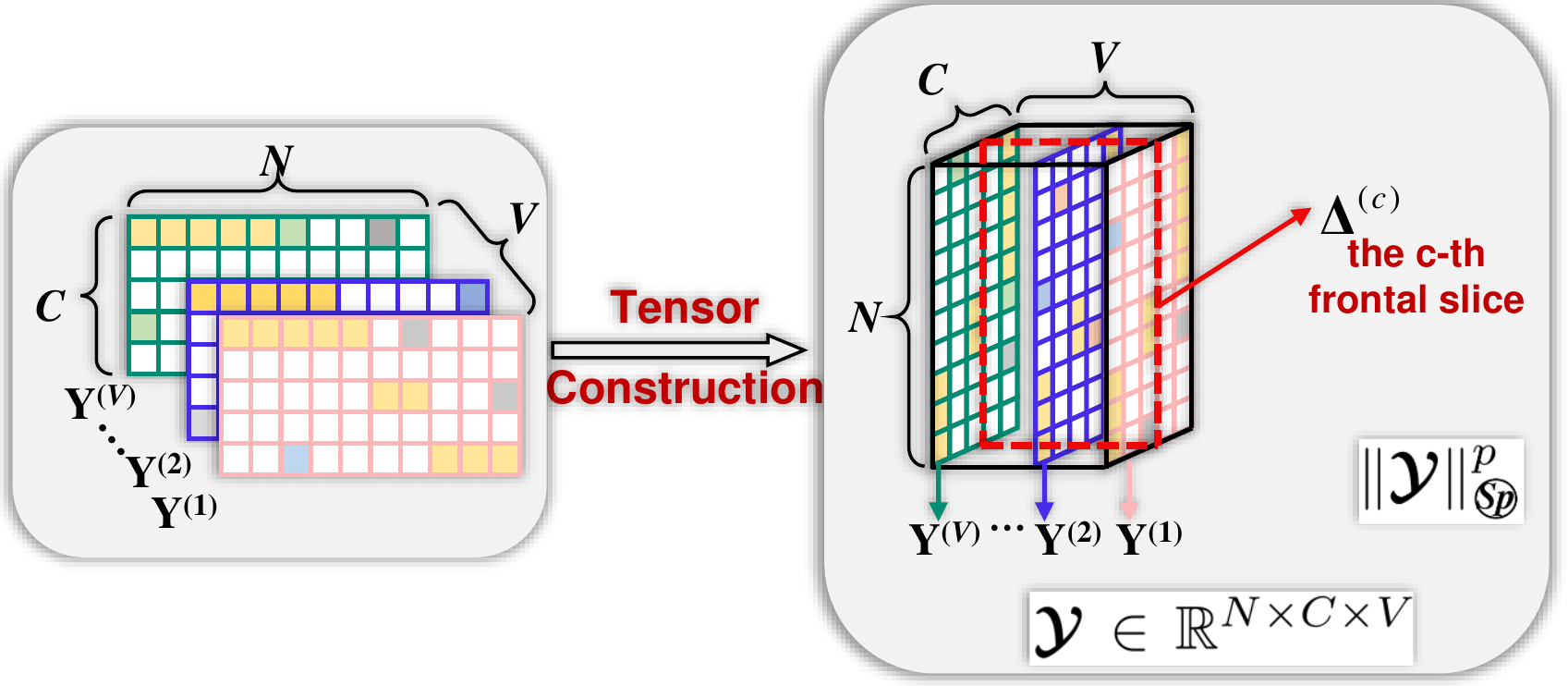}
	\caption{Construction of tensor $\bm{\mathcal{Y}} \in \mathbb{R}^{N\times C\times V}$. $\bm{\Delta}^{(c)}$ denotes the $c$-th frontal slice of $\bm{\mathcal{Y}}$ ($c\in\{1\textrm{, }2\textrm{, }\cdots\textrm{, }C\}$).}
	\label{fig1}
	\vspace{-3mm}
\end{figure}

In the model (\ref{model0}), the second term aims to minimize the divergence between ${\mathbf{Y}^{(v)}}$. To solve this problem, a naive method is to leverage squared \emph{F}-norm to learn the discrete label matrix. As we all know, $F$-norm is a one-dimensional and pixel-wise measurement method. Thus, it cannot well exploit the complementary information embedded in ${\mathbf{Y}^{(v)}}$. It is evident that the low rank approximation using tensor Schatten $p$-norm performs very well in exploiting the complementary information embedded in views~\cite{GaoZXXGT21,XiaZGSHG21}, thus, we minimize the divergence between ${\bf{Y}^{v}}$ by using the tensor Schatten $p$-norm minimization on the third-order tensor ${\mathcal Y}$ which consists of ${\bf{Y}^{v}}$. Thus, we have

\begin{equation}\label{model1}
	\begin{aligned}
		&\mathop {{\rm{min}}}\limits_{{{\bf{Y}}^{(v)}},{\alpha _{\rm{v}}}} \sum\limits_{v{\rm{ = 1}}}^{{V}} {\alpha _{\rm{v}}^{{r}}} {\rm{tr}}({{\bf{G}}^{(v)T}}{{\bf{D}}^{(v)}}{{\bf{G}}^{(v)}})\textrm{+}\lambda{\| {\bm{\mathcal Y}} \|_{\Sp}^p}\\
		&\emph{\textrm{s.t.}} \quad \quad {\textbf{Y}^{\textrm{(\emph{v})}}\in \textrm{Ind}}\textrm{, }\sum\limits_{\textrm{\emph{v}=1}}^\textrm{\emph{V}} {{\alpha_\textrm{\emph{v}}}\textrm{ = 1}} ,{\rm{ }}{\alpha_\textrm{\emph{v}}} \ge {{\textrm{0}}}
	\end{aligned}
\end{equation}
where the $i$-th lateral slice of $\bm{\mathcal{Y}} \in \mathbb{R}^{N\times V\times C}$ is ${\textbf{Y}}^{(v)}$ (See Fig. \ref{fig1}); $\left\|\bullet \right\|_{\Sp}$ is the tensor Schatten $p$-norm (see Definition~\ref{Sp-norm}).

\begin{definition}\label{Sp-norm}(Tensor Schatten p-norm~\cite{GaoZXXGT21})
Given ${\bm{\mathcal Y}}\in{\mathbb{R}}^{n_1 \times n_2 \times n_3}$, $h=\min(n_1,n_2)$, the tensor Schatten $p$-norm of tensor ${\bm{\mathcal Y}}$ is defined as
\begin{equation}
\begin{array}{c}
{\left\| {\bm{\mathcal Y}} \right\|_{{\Sp}}} = {\left( {\sum\limits_{i = 1}^{{n_3}} {\left\| {{{\overline {\bm{\mathcal Y}} }^{(i)}}} \right\|}_{{\Sp}}^p} \right)^{\frac{1}{p}}} = {\left( {\sum\limits_{i = 1}^{{n_3}} {\sum\limits_{j = 1}^h {{\sigma _j}{{\left( {{{\overline {\bm{\mathcal Y}} }^{(i)}}} \right)}^p}} } } \right)^{\frac{1}{p}}}
\end{array}\label{4}
\end{equation}
where ${\sigma _j}(\overline{\bm{\mathcal Y}}^{(i)})$ denotes the j-th singular value of $\overline{\bm{\mathcal Y}}^{(i)}$.
		\label{definition1}
	\end{definition}

\begin{remark}\label{remark1} when $p=1$, tensor Schatten p-norm of ${\bm{\mathcal Y}}\in{\mathbb{R}}^{n_1 \times n_2 \times n_3}$ becomes tensor nuclear norm~\cite{GaoZXXGT21}, i.e., ${\left\| {\bm{\mathcal Y}} \right\|_{*}} ={ {\sum\limits_{i = 1}^{{n_3}} {\sum\limits_{j = 1}^h {{\sigma _j}{{\left( {{{\overline {\bm{\mathcal Y}} }^{(i)}}} \right)}}} } } }$. Take matrix Schatten p-norm as an example, that is for
		${\bm{Y}} \in\mathbb{R}^{n_1\times n_2}$ and the singular values of ${\bm{Y}}$ denoted by $\sigma_1,\ldots,\sigma_{h}$, we have
		$\|{\bm{ Y}}\|^p_{\Sp}=\sigma_1^p+\cdots+\sigma_{h}^p,\ p>0.$ Nie et al.~\cite{nie2012robust} has shown that $\lim_{p\to  0}\|{\bm{ Y}}\|^p_{\Sp}=\#\{i: \sigma_i\ne 0\}=\hbox{rank}({\bm{Y}}).$ And  for $0 \le p \le 1$, i.e. when $p$ is appropriately chosen, the Schatten p-norm can give us quite effective improvements for a tighter approximation of the rank function.
\end{remark}


\begin{remark}
	The regularizer in the proposed objective (\ref{model1}) is used to explore the complementary information embedded in inter-views cluster assignment matrices $\textbf{Y}^{(v)}$ ($v=1,2,\cdots,V$). Fig.~\ref{fig1} shows the construction of tensor $\bm{\mathcal{Y}}$, it can be seen that the $c$-th frontal slice $\bm{\Delta}^{(c)}$ describes the similarity between $N$ sample points and the $c$-th cluster in different views. The idea cluster assignment matrix $\textbf{Y}^{(v)}$ should satisfy that the relationship between $N$ data points and the $c$-th cluster is consistent in different views. Since different views usually show different cluster structures, we impose tensor Schatten  p-norm minimization~\cite{GaoZXXGT21} constraint on $\bm{\mathcal{Y}}$, which can make sure each $\bm{\Delta}^{(c)}$ has spatial low-rank structure. Thus $\bm{\Delta}^{(c)}$ can well characterize the complementary information embedded in inter-views.
\end{remark}

\subsection{Optimization}

In (\ref{model1}), the main difficulty in finding the discrete assignment matrix is due to the non-convex property. To this end, we suppose the number of samples in each cluster are equal. In this case, $N_1=N_2=\cdots=N_c$, according to (\ref{gao-th1-33}), we have ${\mathbf{Y}^{(v)}} = {\mathbf{G}^{(v)}}$.
 Thus, (\ref{model1}) becomes
\begin{equation}\label{objfinal}
	\begin{aligned}
		&\mathop {\textrm{min}}\limits_{\textbf{Y}^{\textrm{(\emph{v})}}\textrm{, }\alpha_\emph{\textrm{v}}}\sum\limits_{\textrm{\emph{v}=1}}^{\emph{\textrm{V}}} \alpha_\emph{\textrm{v}}^\emph{\textrm{r}}\mathrm{tr}\textrm{(}{{\textbf{Y}^{\textrm{(\emph{v})}}}^\textrm{T}}\textbf{D}^{\textrm{(\emph{v})}} \textbf{Y}^{\textrm{(\emph{v})}}\textrm{)}\textrm{+}\lambda{\| {\bm{\mathcal Y}} \|_{\Sp}^p}\\
		&\emph{\textrm{s.t.}} \quad \quad {\textbf{Y}^{\textrm{(\emph{v})}}\in \textrm{Ind}}\textrm{, }\sum\limits_{\textrm{\emph{v}=1}}^\textrm{\emph{V}} {{\alpha_\textrm{\emph{v}}}\textrm{ = 1}} ,{\rm{ }}{\alpha_\textrm{\emph{v}}} \ge {{\textrm{0}}}
	\end{aligned}
\end{equation}

Inspire by the augmented Lagrange multiplier (ALM)~\cite{LinLS11}, we introduce an auxiliary variable $\bm{\mathcal{J}}$ and rewrite (\ref{objfinal}) as
\begin{equation}\label{obj function2}
	\begin{aligned}
		\bm{\mathcal{L}}(\bm{\mathcal{Y}},\bm{\mathcal{J}} ) &= \sum\limits_{\textrm{\emph{v}=1}}^{\emph{\textrm{V}}} \alpha_\emph{\textrm{v}}^\emph{\textrm{r}}\mathrm{tr}\textrm{(}{{\textbf{Y}^{\textrm{(\emph{v})}}}^\textrm{T}}\textbf{D}^{\textrm{(\emph{v})}} \textbf{Y}^{\textrm{(\emph{v})}}\textrm{)}\textrm{+}\lambda{\| {\bm{\mathcal J}} \|_{\Sp}^p}\\
		&\quad + {\langle{\bm{\mathcal{Q}}, \bm{\mathcal{Y}}-\bm{\mathcal{J}}}\rangle} + \frac{\mu}{2}\|\bm{\mathcal{Y}}-\bm{\mathcal{J}}\|_F^2
	\end{aligned}
\end{equation}
where $\bm{\mathcal{Q}}$ is Lagrange multipliers; $\mu$ is a penalty parameter. The optimization process could be separated into three steps:

$\bullet$ \textbf{Solving $\mathbf{Y}^{(v)}$ with fixed $\alpha_v$ and  $\bm{\mathcal{J}}$.} When $\alpha_v$ and $\bm{\mathcal{J}}$ are fixed, the optimization w.r.t. $\mathbf{Y}^{(v)}$ in  (\ref{obj function2}) becomes
\begin{equation}\label{SolvingY}
	\begin{aligned}
		\mathop {\textrm{min} }\limits_{\mathbf{Y}^{(v)}\in \textrm{Ind}}& \sum\limits_{\textrm{\emph{v}=1}}^{\emph{\textrm{V}}} \alpha_\emph{\textrm{v}}^\emph{\textrm{r}}\mathrm{tr}\textrm{(}{{\textbf{Y}^{\textrm{(\emph{v})}}}^\textrm{T}}\textbf{D}^{\textrm{(\emph{v})}} \textbf{Y}^{\textrm{(\emph{v})}}\textrm{)}\textrm{+}\frac{\mu}{\textrm{2}}\|\bm{\mathcal{Y}}\textrm{-}\bm{\mathcal{J}}\textrm{+}\frac{\bm{\mathcal{Q}}}{\mu}\|_F^2\\
	\end{aligned}
\end{equation}

Since all $\mathbf{Y}^{(v)}$'s ($v=1,\cdots,V$) are independent, then model (\ref{SolvingY}) can be decomposed into $V$ independent sub-optimization problems. In other words, we can obtain $\mathbf{Y}^{(v)}$ ($v=1,\cdots,V$) of the corresponding view by solving
\begin{equation}\label{SolvingYv}
	\begin{aligned}
		\mathop {\textrm{min} }\limits_{\mathbf{Y}^{(v)}\in \textrm{Ind}}& 		 \alpha_\emph{\textrm{v}}^\emph{\textrm{r}}\mathrm{tr}\textrm{(}{{\textbf{Y}^{\textrm{(\emph{v})}}}^\textrm{T}}\textbf{D}^{\textrm{(\emph{v})}} \textbf{Y}^{\textrm{(\emph{v})}}\textrm{)}\textrm{+}\frac{\mu}{\textrm{2}}\|\mathbf{Y}^{(v)}\textrm{-}\mathbf{S}^{(v)}\|_F^2
	\end{aligned}
\end{equation}
where $\mathbf{S}^{(v)}\textrm{=}\mathbf{J}^{(v)}\textrm{-}\frac{1}{\mu}\mathbf{Q}^{(v)}$.

It is hard to directly get the optimal solution of (\ref{SolvingYv}) due to the discrete constraint. Sine all rows of $\mathbf{Y}^{(v)}$ are independent, we sequentially solve $\mathbf{Y}^{(v)}$ row by row with fixed the other rows. To update the $k$-th row, we assume the other rows of $\mathbf{Y}^{(v)}$ are known. Then, the optimization w.r.t. the $k$-th row in (\ref{SolvingYv}) becomes
\begin{equation}\label{SolveYFinal}
	\begin{aligned}
		& \mathop {\textrm{min} }\limits_{\mathbf{y}_{k}^{\textrm{(\emph{v})}}\in \textrm{Ind}}   \alpha_\emph{\textrm{v}}^\emph{\textrm{r}} \sum\limits_{{{i,j}=1}}^{\emph{\textrm{N}}} d_{ij}^{\textrm{(\emph{v})}} \mathrm{tr}(\mathbf{y}_{i}^{\textrm{(\emph{v})}}{\mathbf{y}_{j}^{\textrm{(\emph{v})}}}^{\textrm{T}})+\frac{\mu}{2}\|\mathbf{y}_k^{\textrm{(\emph{v})}}\textrm{-}\mathbf{s}_k^{\textrm{(\emph{v})}}\|_F^2 \\
		 \Leftrightarrow & \mathop {\textrm{min} }\limits_{\mathbf{y}_{k}^{\textrm{(\emph{v})}}\in \textrm{Ind}}   \alpha_\emph{\textrm{v}}^\emph{\textrm{r}} {\mathbf{y}_{k}^{\textrm{(\emph{v})}}}^{\textrm{T}} (2\sum\limits_{
		 	 i\mathrm{=}1,i\mathrm{\neq}k}^{\emph{\textrm{N}}} d_{ki}^{\textrm{(\emph{v})}} \mathbf{y}_{i}^{\textrm{(\emph{v})}})-\mu{\mathbf{y}_k^{\textrm{(\emph{v})}}}^\textrm{T}\mathbf{s}_k^{\textrm{(\emph{v})}}
\end{aligned}
\end{equation}

Since $d_{kk}^{\textrm{(\emph{v})}}\mathrm{=}0$, thus, the problem~(\ref{SolveYFinal}) can be rewritten as
\begin{equation}\label{Solveyi1}
	\begin{aligned}
	& \mathop {\textrm{min} }\limits_{\mathbf{y}_{k}^{\textrm{(\emph{v})}}\in \textrm{Ind}}   2\alpha_\emph{\textrm{v}}^\emph{\textrm{r}} {\mathbf{y}_{k}^{\textrm{(\emph{v})}}}^{\textrm{T}} (\sum\limits_{i\mathrm{=}1}^{\emph{\textrm{N}}} d_{ki}^{\textrm{(\emph{v})}} \mathbf{y}_{i}^{\textrm{(\emph{v})}})-\mu{\mathbf{y}_k^{\textrm{(\emph{v})}}}^\textrm{T}\mathbf{s}_k^{\textrm{(\emph{v})}} \\
	\Leftrightarrow & \mathop {\textrm{min} }\limits_{\mathbf{y}_{k}^{\textrm{(\emph{v})}}\in \textrm{Ind}}  {\mathbf{y}_{k}^{\textrm{(\emph{v})}}}^{\textrm{T}} (2\alpha_\emph{\textrm{v}}^\emph{\textrm{r}} {\textbf{Y}_*^{\textrm{(\emph{v})}}}^\textrm{T} \mathbf{d}_k^{\textrm{(\emph{v})}} -\mu\mathbf{s}_k^{\textrm{(\emph{v})}} )
\end{aligned}
\end{equation}
where  $\mathbf{d}_k^{\textrm{(\emph{v})}}\textrm{=}[d_{k1}^{\textrm{(\emph{v})}}, d_{k2}^{\textrm{(\emph{v})}},\cdots,d_{kn}^{\textrm{(\emph{v})}} ]^\textrm{T}$, $d_{ii}^{\textrm{(\emph{v})}}\textrm{=}0$;  $\textbf{Y}_*^{\textrm{(\emph{v})}}$ is the cluster assignment matrix before $\mathbf{y}_k^{(v)}$ is updated. Then, the optimal solution of  (\ref{SolvingY}) can be reformulated as

\begin{equation}\label{Solveyi}
	\begin{aligned}
		y_{ip}^{\textrm{(\emph{v})}}=\begin{cases}
			1\textrm{,} & p= \mathop{\textrm{arg min} }\limits_{j}(2\alpha_\emph{\textrm{v}}^\emph{\textrm{r}} {\textbf{Y}_*^{\textrm{(\emph{v})}}}^\textrm{T} \mathbf{d}_i^{\textrm{(\emph{v})}} -\mu\mathbf{s}_i^{\textrm{(\emph{v})}})_j\\
			0\textrm{,} & \quad\quad\mbox{\textrm{otherwise}} .
		\end{cases}
	\end{aligned}
\end{equation}

$\bullet$ \textbf{Solving $\alpha_v$ with fixed $\bm{\mathcal{Y}}$ and  $\bm{\mathcal{J}}$.} When $\bm{\mathcal{Y}}$ and $\bm{\mathcal{J}}$ are fixed, the optimization w.r.t. $\alpha_v$ in  (\ref{obj function2}) is equivalent to
\begin{equation}\label{alpha}
	\begin{aligned}
		&\mathop {\textrm{min}}\limits_{\alpha_\emph{\textrm{v}}}\sum\limits_{\textrm{\emph{v}=1}}^{\emph{\textrm{V}}} \alpha_\emph{\textrm{v}}^\emph{\textrm{r}}\mathrm{tr}\textrm{(}{{\textbf{Y}^{\textrm{(\emph{v})}}}^\textrm{T}}\textbf{D}^{\textrm{(\emph{v})}} \textbf{Y}^{\textrm{(\emph{v})}}\textrm{)} &\emph{\textrm{s.t.}} \quad \sum\limits_{\textrm{\emph{v}=1}}^\textrm{\emph{V}} {{\alpha_\textrm{\emph{v}}}\textrm{ = 1}} ,{\rm{ }}{\alpha_\textrm{\emph{v}}} \ge {{\textrm{0}}}
	\end{aligned}
\end{equation}

According to the Lagrange multiplier method, we establish the Lagrangian function as
\begin{equation}\label{alpha_LFun}
	\begin{aligned}
		\bm{\mathcal{L}}(\alpha_1,\ldots,\alpha_V,\gamma) = \sum\limits_{v=1}^{V}\alpha_v^r\mathbf{M}_v + \gamma\left(1-\sum\limits_{v=1}^{V}{\alpha_v}\right)
	\end{aligned}
\end{equation}
where $\gamma$ is a parameter; ${\mathbf{M}_v} =
\mathrm{tr}\textrm{(}{{\textbf{Y}^{\textrm{(\emph{v})}}}^\textrm{T}}\textbf{D}^{\textrm{(\emph{v})}} \textbf{Y}^{\textrm{(\emph{v})}}\textrm{)}$.

According to the KKT conditions, we have
\begin{equation}\label{alpha_kkt}
	\begin{aligned}
		\begin{cases}
			\frac{\partial \bm{\mathcal{L}}}{\partial \alpha_v} = r \alpha_v^{r\mathrm{-}1} \textbf{M}_v \mathrm{-} \gamma \mathrm{-} \theta_v = 0 ,\quad  v=1,\ldots,V \\
			\theta_v \alpha_v =0 ,\quad  v=1,\ldots,V \\
			\theta_v \geq 0 ,\quad  v=1,\ldots,V \\
			1\mathrm{-}\sum\limits_{v=1}^{V}{\alpha_v} = 0
		\end{cases}
	\end{aligned}
\end{equation}

 In light of some simple algebra, we have
\begin{equation}\label{alpha_opt}
	\begin{aligned}
		\alpha_v=\frac{(\mathbf{M}_v)^{\frac{1}{1-r}}} {\sum\limits_{v=1}^{V}(\mathbf{M}_v)^{\frac{1}{1-r}}}
	\end{aligned}
\end{equation}

$\bullet$ \textbf{Solving $\bm{\mathcal{J}}$ with fixed $\bm{\mathcal{Y}}$ and $\alpha_v$.} In this case, $\bm{\mathcal{J}}$ can be obtained by solving
\begin{equation}\label{SolveJ}
	\begin{aligned}
		\bm{\mathcal{J}}^*=\mathop{\arg\min}_{\bm{\mathcal{J}}} \frac{\lambda}{\mu}{\| {\bm{\mathcal J}} \|_{\Sp}^p} + \frac{1}{2}\|\bm{\mathcal{Y}}-\bm{\mathcal{J}}+\frac{\bm{\mathcal{Q}}}{\mu}\|_F^2
	\end{aligned}
\end{equation}

 To solve model (\ref{SolveJ}), we first introduce Theorem~\ref{T1}~\cite{GaoZXXGT21}.
	\begin{theorem}\label{T1}
		Given third-order tensor ${\bm{{\cal A}}{\in \mathbb{R}^{{n_1} \times {n_2} \times {n_3}}}}$ whose t-SVD denotes by ${\bm{{\cal A}}\! = \!\bm{{\cal U}}*\bm{{\cal S}}\!*\!{\bm{{\cal V}}^\mathrm{T}}}$. For the problem	
		\begin{equation}\label{19}
			\arg {\rm{ }}\mathop {\min }\limits_{\bm{{\cal X}}} \mu\left\| \bm{{\cal X}} \right\|_{\Sp}^p + \frac{1}{2}\left\| {\bm{{\cal X}} - \bm{{\cal A}}} \right\|_F^2,
		\end{equation}
the optimal solution is
		\begin{equation}\label{20}
			{{\bm{{\cal X}}}^ * } = {\rm{ }}{\Gamma _{\mu}}\left[ \bm{{\cal A}} \right]{\rm{ }} = {\rm{ }}\bm{{\cal U}} * \mathrm{ifft}({P_{\mu}}\left( {\overline {\bm{{\cal A}}} } \right)) * {\bm{{\cal V}}^\mathrm{T}},
		\end{equation}
where ${P_{\mu}}(\bar {\bm{{\cal A}} }) {\in \mathbb{R}^{{n_1} \times {n_2} \times {n_3}}}$ is a f-diagonal tensor whose diagonal elements can be obtained by the GST algorithm introduced in Lemma 1 of \cite{GaoZXXGT21}.
	\end{theorem}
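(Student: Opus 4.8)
The plan is to exploit the fact that both terms of the objective in~(\ref{19}) decouple across the frontal slices once we pass to the Fourier domain along the third mode. First I would recall that the DFT along the third dimension is, up to the scalar factor $n_3$, an isometry, so by Parseval's identity
\begin{equation}
\left\| {\bm{\mathcal X}} - {\bm{\mathcal A}} \right\|_F^2 = \frac{1}{n_3}\sum_{i=1}^{n_3}\left\| {\overline{\bm{\mathcal X}}}^{(i)} - {\overline{\bm{\mathcal A}}}^{(i)} \right\|_F^2 .
\end{equation}
Combining this with Definition~\ref{Sp-norm}, which already writes $\|{\bm{\mathcal X}}\|_{\Sp}^p$ as a sum over the frontal slices ${\overline{\bm{\mathcal X}}}^{(i)}$ of their matrix Schatten $p$-norms, the whole objective splits into $n_3$ independent matrix problems of the form $\min_{\bm X} \mu' \|{\bm X}\|_{\Sp}^p + \frac{1}{2}\|{\bm X} - {\overline{\bm{\mathcal A}}}^{(i)}\|_F^2$, where the $n_3$ scaling is absorbed into the effective threshold $\mu'$. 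Hence it suffices to solve a single matrix Schatten $p$-norm proximal problem and then reassemble the slices via the inverse DFT.

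For the matrix subproblem I would first fix the singular values of the unknown ${\bm X}$ and optimize over its singular vectors. Writing $\|{\bm X}-{\bm A}\|_F^2 = \|{\bm X}\|_F^2 - 2\langle {\bm X},{\bm A}\rangle + \|{\bm A}\|_F^2$ and noting that both $\|{\bm X}\|_{\Sp}^p$ and $\|{\bm X}\|_F^2$ depend only on the singular values, the only term coupling the singular vectors of ${\bm X}$ and ${\bm A}$ is $\langle {\bm X},{\bm A}\rangle$. By von Neumann's trace inequality this inner product is maximized, for fixed singular values, exactly when ${\bm X}$ and ${\bm A}$ share the same left and right singular vectors. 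Therefore the optimal ${\bm X}$ inherits the singular vectors of ${\bm A}$, and the problem collapses to the scalar programme $\min_{\sigma_j \ge 0}\sum_j \big[\mu' \sigma_j^p + \tfrac12(\sigma_j-\delta_j)^2\big]$ over the singular values $\sigma_j$, where $\delta_j$ are the singular values of ${\bm A}$.

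Each scalar term is independent, and its minimizer is precisely the generalized soft-thresholding (GST) value of $\delta_j$; collecting these thresholded values into the f-diagonal tensor $P_{\mu}({\overline{\bm{\mathcal A}}})$, applying the inverse DFT, and recombining with the t-SVD factors ${\bm{\mathcal U}}$ and ${\bm{\mathcal V}}$ yields exactly~(\ref{20}). The main obstacle is the matrix step: one must rigorously justify the use of von Neumann's trace inequality, including the characterization of its equality case that pins down the singular vectors, and then handle the non-convexity of the scalar problem for $0<p\le 1$, where naive differentiation does not certify a global minimum and the GST construction of Lemma~1 of~\cite{GaoZXXGT21} is essential to guarantee global optimality.
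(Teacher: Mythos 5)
The paper itself contains no proof of this theorem: it is quoted wholesale from \cite{GaoZXXGT21} (``To solve model (\ref{SolveJ}), we first introduce Theorem~\ref{T1}''), so the only meaningful comparison is with the proof in that reference --- and your proposal reconstructs essentially that argument. The three ingredients you identify are exactly the standard ones: Parseval's identity to split the objective across the Fourier-domain frontal slices (with the $1/n_3$ factor folded into the effective per-slice threshold $\mu'=n_3\mu$, as you note), von Neumann's trace inequality to show each slice's minimizer inherits the singular vectors of $\overline{\bm{\mathcal{A}}}^{(i)}$, and the GST scalar solver of Lemma~1 of \cite{GaoZXXGT21} to solve the nonconvex one-dimensional problems $\min_{\sigma_j\ge 0}\,\mu'\sigma_j^p+\frac{1}{2}(\sigma_j-\delta_j)^2$ globally, which is indeed where the real content lies for $0<p<1$. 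Two small points you should make explicit if you write this out in full: after the DFT the slices are complex-valued, so von Neumann's inequality must be invoked in its complex form (bounding $\mathrm{Re}\langle \mathbf{X},\mathbf{A}\rangle$ by $\sum_j\sigma_j(\mathbf{X})\,\sigma_j(\mathbf{A})$, with equality precisely when $\mathbf{X}$ and $\mathbf{A}$ admit a simultaneous SVD); and to certify that the reassembled $\bm{\mathcal{X}}^*$ is a real tensor you should observe that the slice problems for indices $i$ and $n_3-i+2$ are complex conjugates of one another, and that GST, acting only on the (real, nonnegative) singular values, preserves this conjugate symmetry, so the inverse FFT returns a real result. With those details supplied, your proof is complete and coincides with the argument in the cited source.
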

	
	According to Theorem \ref{T1}, let ${\bm{{\mathcal{Y}}}{\rm{ }} + {\rm{ }}\frac{\bm{{\cal Q}}}{\mu }{\rm{ }}}=\bm{{\mathcal{U}}}*\bm{{\Sigma}}*\bm{{\mathcal{V}}}^\mathrm{T}$, then the solution of (\ref{SolveJ}) is	
\begin{equation}\label{21}
\begin{aligned}
{{\bm{{\cal J}}}^ * } &= {\rm{ }}\Gamma {{\rm{ }}_{\frac{\lambda}{\mu } }}\left[ {\bm{{\mathcal{Y}}}{\rm{ }} + {\rm{ }}\frac{\bm{{\cal Q}}}{\mu }{\rm{ }}} \right]{\rm{ }}= {\rm{ }}{{\bm{{\cal U}}}_{\bm{{\cal }}}} * \mathrm{ifft}({P_{\frac{\lambda}{\mu } }}\left( {\overline {\bm{{\mathcal{Y}}}{\rm{ }} + {\rm{ }}\frac{\bm{{\cal Q}}}{\mu }} {\rm{ }}} \right)) * {\bm{{\cal V}}}^{\mathrm{T}}{\rm{ }}
\end{aligned}
\end{equation}

Finally, Algorithm~\ref{A1} lists the pseudo code of the optimization procedure about the model (\ref{objfinal}).

\begin{algorithm}[!t]
	\caption{Solving the Model (\ref{obj function2})}
	\label{A1}
	\LinesNumbered
	\KwIn{ Data matrices $\{{\mathbf{X}}^{(v)}\}_{v=1}^{V}\in \mathbb{R}^{N\times d_v}$; anchors number $\theta$; cluster number $C$.}
	\KwOut{ Cluster assignment matrix $\mathbf{K}$}
	\textbf{Initialize}: $\lambda$, $\Omega$, $r$, $p$, $\bm{{\mathcal J}}$, ${{\bf{D}}^{(v)}}$, $\bm{{\mathbf Y^{(v)}}}$, ${\alpha_v}={\frac{1}{V}}$, ($v=1\ldots V$), $\rho = 1.1$, $\mu={10^{ - 4}}$, ${\mu _{\max }} = {10^{10}}$.\\	
	Construct $\mathbf{H}^{(v)}$ according to Eq. (\ref{consD});\\
	\While{not converge}{
		Update ${\bm{{\mathcal Y}}}$ by solving Eq. (\ref{Solveyi}); \\
		Update ${\bm{{\mathcal J}}}$ by solving Eq. (\ref{SolveJ});\\
		Update ${\bm{{\mathcal Q}}}$ by ${\bm{{\mathcal Q}}}{\rm{ = }}{\bm{{\mathcal Q}}}{\rm{ + }}\mu {\rm{(}}{\bm{{\mathcal Y}}}{\rm{ - }}{\bm{{\mathcal J}}}{\rm{)}}$;\\
		Update ${\alpha_v}$ by Eq. (\ref{alpha_opt}) ($v=1\ldots V$); \\
		Update $\mu$ by $\mu  = \min \left( {\rho \mu ,{\mu _{\max }}} \right)$;\\
	}
	Calculate the cluster assignment matrix $\textbf{K}$ by $k_{il} = \begin{cases}
		1\textrm{,} &  l=\mathop{\arg\max}\limits_j \left(\sum\limits_{v=1}^{\textrm{\emph{V}}}\alpha_v^r \textbf{Y}^{(v)}\right)_{ij}  \\
		0\textrm{,} & \quad \quad\mbox{otherwise}.
	\end{cases}  $\\
	\textbf{return}: The cluster assignment matrix $\mathbf{K}$
\end{algorithm}

\subsection{$\mathbf{D}^{(v)}$ Construction}

 It is well known that $K$-means cannot well separate clusters which are non-linearly separable in input space due to the Euclidean distance between data poins. To address this issue, it is useful to take the advantage of adjacency matrix, which can well characterize the intrinsic structure of arbitrarily-shaped clusters, and good property of anchor graph, we use small adjacency matrix $\mathbf{B}^{(v)}\in \mathbb{R}^{N\times \theta}$ to construct distance matrix $\mathbf{D}^{(v)}$, where $\theta\ll N$ is the number of anchors. To be specific, we firstly construct $\mathbf{B}^{(v)}\in \mathbb{R}^{N\times \theta}$ by~\cite{pami/XiaGWGDT23}, then calculate the symmetric and doubly-stochastic adjacency matrix $\mathbf{W}^{(v)}$ by~\cite{LiuHC10}, i.e.,
\begin{equation}\label{consW}
	\begin{aligned}
		\mathbf{W}^{(v)} = \mathbf{B}^{(v)} {\Delta ^{(v)}}^{\textrm{-1}} {\mathbf{B}^{(v)}}^\textrm{T}
	\end{aligned}
\end{equation}
where $\Delta^{(v)}\in\mathbb{R}^{\theta\times\theta}$ is a diagonal matrix and $\Delta _{jj}^{(v)} =\sum\nolimits_{i=1}^N {b^{v}_{ij}}$, $b^{v}_{ij}$ is the $i$-th row $j$-th column element of ${\bf{B}}^{v}$.

According to the relationship between similarity and distance for data points, to improve the clustering performance and stableness of \emph{k}-means, inspired by the Butterworth filters, we calculate $d^{v}_{ij}$, which is the $i$-th row $j$-th column element of ${\bf{D}}^{v}$, by
\begin{equation}\label{consD}
	\begin{aligned}
		h^{(v)}_{ij} = \sqrt{\frac{1}{1+ ({\frac{w^{(v)}_{ij}}{\Omega}})^4 }}
	\end{aligned}
\end{equation}
where $\Omega$ is a hyperparameter.

Fig.\ref{bw} shows the curve of the model (\ref{consD}). It can be seen that the model (\ref{consD}) increases the distance between similar and dissimilar samples, while reducing the distance between similar samples. This indicates that Eq. (\ref{consD}) helps increase the separability of samples.


\begin{figure}[!t]
	\centering
	\includegraphics[width=0.7\linewidth]{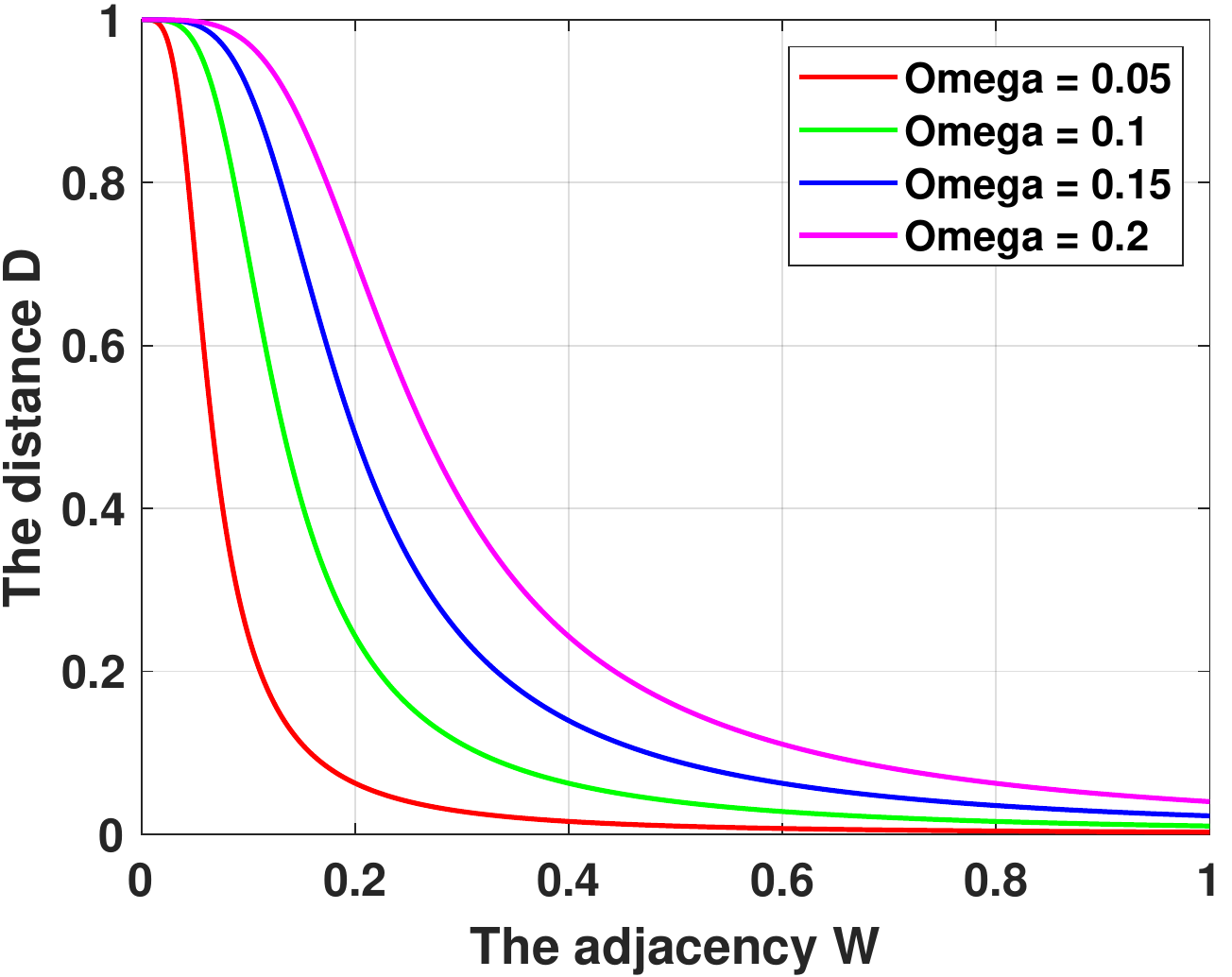}
	\caption{Distance vs. similarity under different $\Omega$'s.}
	\label{bw}
\end{figure}

\subsection{Complexity Analysis}
To better illustrate the efficiency of the proposed model, computational complexity and storage complexity are analyzed in this section.

\textbf{\emph{Computational complexity:}} Our method consists of two stages: 1) Construction of $\{\mathbf{H}^{(v)}\}_{v=1}^V$, 2) Iterative updating  (\ref{obj function2}). The first stage takes $\bm{\mathcal{O}}(VN\theta d + VN\theta\log(\theta))$ for the anchor graph construction, where $d=\sum\nolimits_{v=1}^V d_v$; $V$, $\theta$ and $N$ are the number of views, anchors and samples, respectively. The second stage mainly focuses on solving two variables ($\bm{\mathbf{Y}}^{(v)}$ and $\bm{\mathcal{J}}$), the complexity in updating these two variables are $\bm{\mathcal{O}}(NC)$ and $\bm{\mathcal{O}}(VNC\log(VN)+V^2NC)$, where $C$ is the number of clusters. For $\theta, C\ll N$, the main complexity in this stage is $\bm{\mathcal{O}}(VNC\log(VN))$. Thus, the main computational complexity of our method is $\bm{\mathcal{O}}(VN\theta d)$, which is linear to the number of samples.

\textbf{\emph{Storage complexity:}} The model involves several variable stores in the optimization process, including $\{\textbf{Y}^{\textrm{(\emph{v})}}\textrm{, }\textbf{D}^{\textrm{(\emph{v})}}\}_{\textrm{\emph{v} = 1}}^{\textrm{\emph{V}}}$, tensor $\bm{\mathcal{J}}$ and $\bm{\mathcal{Y}}$. According to the size of these variables and considering the number of views, their storage complexity is $\bm{\mathcal{O}}(\textrm{\emph{VNC}})$, $\bm{\mathcal{O}}(\textrm{\emph{VN}}^{\textrm{2}})$, $\bm{\mathcal{O}}(\textrm{\emph{VNC}})$, and $\bm{\mathcal{O}}(\textrm{\emph{VNC}})$, respectively. Thus, the total storage complexity of the proposed method is $\bm{\mathcal{O}}(\textrm{\emph{3VNC}}+\textrm{\emph{VN}}^{\textrm{2}})$.

\subsection{Class Equilibrium Analysis}

In this section, we have shown that our model (\ref{objfinal}) can guarantee that the numbers of samples of all clusters are approximately identical, i.e, $N_1=N_2=\cdots=N_c$.

In the first term in the model (\ref{objfinal}), all views are independent, thus for the sake of description, we ignore the variable $v$ and rewrite the first term as
\begin{equation}\label{EA-1}
\begin{array}{l}
tr({{\bf{Y}}^T}{\bf{D}}{\bf{Y}}) = \sum\limits_{i,l} {\left\| {{{\bf{x}}_i} - {{\bf{x}}_j}} \right\|_2^2\left\langle {{{\bf{y}}^i},{{\bf{y}}^j}} \right\rangle } \\
 = \sum\limits_{k = 1}^c {\sum\limits_{\scriptstyle{{\bf{x}}_i} \in {{\bf{A}}_k}\hfill\atop
\scriptstyle{{\bf{x}}_j} \in {{\bf{A}}_k}\hfill} {{{\bf{d}}_{ij}}} }
\end{array}
\end{equation}

According to (\ref{consD}), we have the distances between data points within the same cluster are approximately identical, which can be denoted by constant ${{\overline {\overline {{\bf{d}}}}}}$ in our model, thus, the model (\ref{EA-1}) can be approximately rewritten as
\begin{equation}\label{EA-2}
\begin{array}{l}
tr({{\bf{Y}}^T}{\bf{D}}{\bf{Y}}) = \sum\limits_{k = 1}^c {{\overline {\overline {{\bf{d}}}}}{N_k}({N_k} - 1)} \\
 \ \ \ \ \ \ \ \ \ \ \ \ \ \ \ \ \ \ \ = \sum\limits_{k = 1}^c {N_k^2{\overline {\overline {{\bf{d}}}}}}  - \sum\limits_{k = 1}^c {{\overline {\overline {{\bf{d}}}}}} {N_k}\\
 \ \ \ \ \ \ \ \ \ \ \ \ \ \ \ \ \ \ \ = {\overline {\overline {{\bf{d}}}}}\sum\limits_{k = 1}^c {N_k^2}  - {\overline {\overline {{\bf{d}}}}}N
\end{array}
\end{equation}

Let ${\bf{b}} = {[1,1,...,1]^T} \in {R^c},{\bf{a}} = [{N_1},{N_2},...,{N_c}]^T$, according to cauchy-schwarz inequality, we have
\begin{equation}\label{EA-3}
{\left\langle {{\bf{a}},{\bf{b}}} \right\rangle ^2} \le {\left\| {\bf{a}} \right\|^2}{\left\| {\bf{b}} \right\|^2}
\end{equation}
then,
\begin{equation}\label{EA-4}
\begin{array}{l}
N_1^2 + N_2^2 + ... + N_c^2 \ge \frac{{{{({N_1} + {N_2} + ... + {N_c})}^2}}}{c} = \frac{{{N^2}}}{c}
\end{array}
\end{equation}

If and only if  ${N_1} = {N_2} = ... = {N_c}$, the equation is satisfied.


\subsection{Convergence Analysis of Algorithm \ref{A1}}
	\begin{lemma}[Proposition 6.2 of \cite{lewis2005nonsmooth}]\label{lewis}
		Suppose $F: \mathbb{R}^{n_1\times n_2}\rightarrow \mathbb{R}$ is represented as $F(X)=f \circ \sigma(X)$, where $X\in\mathbb{R}^{n_1\times n_2} $ with SVD
		$X=U \mathrm{diag}(\sigma_1, \ldots, \sigma_n) V^{\mathrm{T}}$, $n=\min(n_1, n_2)$, and $f$ is differentiable. The gradient of $F(X)$ at $X$ is
		\begin{equation}
			\label{deritheorem}
			\frac{\partial F(X)}{\partial X}=U \mathrm{diag}(\theta) V^{\mathrm{T}},
		\end{equation}
		where $\theta=\frac{\partial f(y)}{\partial y}|_{y=\sigma (X)}$.
	\end{lemma}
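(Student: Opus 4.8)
The plan is to exploit the orthogonal invariance of the singular-value map and reduce the computation of $\nabla F(X)$ to the case of a diagonal matrix. First I would record the standing structural hypothesis that makes $F=f\circ\sigma$ well defined on matrices: the scalar function $f$ must be \emph{absolutely symmetric}, i.e. invariant under permutations and sign changes of its arguments (this holds in our application, where $f(y)=\sum_j|y_j|^p$ produces the Schatten $p$-norm). Under this hypothesis, for any orthogonal $P\in O(n_1)$ and $Q\in O(n_2)$ one has $\sigma(PXQ^{\mathrm{T}})=\sigma(X)$, and hence $F(PXQ^{\mathrm{T}})=F(X)$.

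Next I would differentiate this invariance. Fixing $P,Q$ and applying the chain rule to the identity $X\mapsto F(PXQ^{\mathrm{T}})=F(X)$ yields the covariance relation $\nabla F(PXQ^{\mathrm{T}})=P\,\nabla F(X)\,Q^{\mathrm{T}}$. Taking $P=U$ and $Q=V$ equal to the SVD factors of $X=U\,\mathrm{diag}(\sigma_1,\ldots,\sigma_n)\,V^{\mathrm{T}}$ then reduces the whole problem to evaluating the gradient at the diagonal matrix $\Sigma=\mathrm{diag}(\sigma_1,\ldots,\sigma_n)$, since $\nabla F(X)=U\,\nabla F(\Sigma)\,V^{\mathrm{T}}$.

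I would then show that $\nabla F(\Sigma)$ is itself diagonal with the claimed entries. For the off-diagonal entries I would substitute signed permutation matrices $P=Q=S$ (diagonal with $\pm1$ entries, together with any rotation acting on the zero rows and columns in the non-square case) into the covariance relation: because such $S$ commute with $\Sigma$ and satisfy $S\Sigma S^{\mathrm{T}}=\Sigma$, we get $\nabla F(\Sigma)=S\,\nabla F(\Sigma)\,S^{\mathrm{T}}$, and flipping a single sign forces every off-diagonal (and every non-square) entry of $\nabla F(\Sigma)$ to vanish. For the diagonal entries I would restrict $F$ to the curve of diagonal matrices through $\Sigma$: for small real perturbations $t$ of $\sigma$ the entries remain in a neighborhood where $\sigma(\mathrm{diag}(t))$ equals $t$ up to the permutation and sign change under which $f$ is invariant, so $F(\mathrm{diag}(t))=f(t)$ locally, and differentiating along the coordinate directions gives $[\nabla F(\Sigma)]_{ii}=\partial f/\partial y_i|_{y=\sigma}=\theta_i$. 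Combining this with the covariance relation yields $\nabla F(X)=U\,\mathrm{diag}(\theta)\,V^{\mathrm{T}}$.

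The main obstacle is the non-smoothness of the singular-value map itself: $\sigma(\cdot)$ fails to be differentiable at matrices with repeated or vanishing singular values, yet the composite $F=f\circ\sigma$ may remain differentiable there, so one cannot simply differentiate $\sigma$ inside the chain rule. The delicate point is therefore to justify that $F$ is genuinely differentiable at $X$—so that $\nabla F(X)$ exists and the covariance manipulations are legitimate—using only the differentiability of the absolutely symmetric $f$. This is the technical heart of Lewis's argument, established by proving that differentiability of $F$ at $X$ is equivalent to differentiability of $f$ at $\sigma(X)$ and that the two gradients are linked exactly by the displayed formula, rather than by naively invoking a (nonexistent) derivative of $\sigma$.
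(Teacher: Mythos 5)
The paper never proves this lemma at all: it is imported verbatim as Proposition~6.2 of \cite{lewis2005nonsmooth}, so the only fair comparison is with Lewis's original argument, and your sketch is essentially a faithful reconstruction of it --- orthogonal invariance of $\sigma$, the covariance relation $\nabla F(PXQ^{\mathrm{T}})=P\,\nabla F(X)\,Q^{\mathrm{T}}$, reduction to the diagonal matrix $\Sigma$, signed permutations (plus extra sign flips on the rows and columns beyond $n$ in the rectangular case, as you note) to kill the off-diagonal entries, and restriction to the diagonal subspace to identify $[\nabla F(\Sigma)]_{ii}=\theta_i$. You also caught something the printed statement gets wrong: absolute symmetry of $f$ is not optional. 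Without it the lemma is false --- take $n_1=n_2=1$ and $f(y)=y$, which is differentiable, yet $F(x)=f(\sigma(x))=|x|$ has no gradient at $0$ --- and it is also what makes the formula well defined across non-unique SVDs (symmetry of $f$ forces $\theta_i=\theta_j$ whenever $\sigma_i=\sigma_j$). The paper's application $f(y)=\sum_j |y_j|^p$, after the $\epsilon$-smoothing used in the convergence proof, does satisfy it.

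The one place your plan is not self-contained is exactly where you flag it: every manipulation after the invariance identity presupposes that $\nabla F(X)$ exists, and the transfer ``$f$ differentiable at $\sigma(X)$ implies $F$ differentiable at $X$'' is precisely the content of the cited proposition, so invoking that equivalence inside the proof would be circular in a from-scratch argument. To close it one must estimate the first-order error directly, using the $1$-Lipschitz property of the singular-value map (Mirsky: $\|\sigma(X+H)-\sigma(X)\|_2\le\|H\|_F$) together with von Neumann's trace inequality to control $\langle U\mathrm{diag}(\theta)V^{\mathrm{T}},H\rangle$, and show
\begin{equation*}
F(X+H)-F(X)-\bigl\langle U\mathrm{diag}(\theta)V^{\mathrm{T}},\,H\bigr\rangle = o(\|H\|_F),
\end{equation*}
which is how Lewis's proof actually proceeds. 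Since you explicitly identified this as the technical heart rather than papering over it, and since the paper itself supplies no proof to deviate from, your outline should be judged a correct plan with one deferred (but correctly located and correctly attributed) hard step.
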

	\begin{theorem}\label{thm1}[Convergence Analysis of \textbf{Algorithm~\ref{A1}}]
		Let $P_{k}=\{\bm{{\mathcal{Y}}}_{k}, \bm{{\mathcal{J}}}_{k}, \bm{{\mathcal{Q}}}_k\},\ 1\leq k< \infty$ in \eqref{objfinal} be a sequence generated
		by \textbf{Algorithm~\ref{A1}}, then
		\begin{enumerate}
			\item  $P_{k}$ is bounded;
			\item  Any accumulation point of $P_{k}$ is a stationary KKT point of \eqref{objfinal}.
		\end{enumerate}
	\end{theorem}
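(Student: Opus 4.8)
The plan is to follow the standard augmented-Lagrangian convergence template, adapted to the non-convex tensor Schatten $p$-norm. Part (1) asks for boundedness of $P_k=\{\bm{\mathcal{Y}}_k,\bm{\mathcal{J}}_k,\bm{\mathcal{Q}}_k\}$ and part (2) for KKT-stationarity of every accumulation point; I would prove (1) first and feed its bounds into (2). The boundedness of $\bm{\mathcal{Y}}_k$ is immediate: each lateral slice $\mathbf{Y}^{(v)}_k$ lies in the indicator set $\mathrm{Ind}$, so its entries are in $\{0,1\}$ and $\|\bm{\mathcal{Y}}_k\|_F=\sqrt{NV}$; likewise each weight vector produced by (\ref{alpha_opt}) lives in the probability simplex and is thus bounded.

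The decisive step is bounding the multiplier $\bm{\mathcal{Q}}_k$. I would start from the optimality condition of the $\bm{\mathcal{J}}$-subproblem (\ref{SolveJ}): since $\bm{\mathcal{J}}_{k+1}$ minimizes $\frac{\lambda}{\mu_k}\|\bm{\mathcal{J}}\|_{\Sp}^p+\frac12\|\bm{\mathcal{Y}}_{k+1}-\bm{\mathcal{J}}+\bm{\mathcal{Q}}_k/\mu_k\|_F^2$, its first-order condition reads $\mu_k(\bm{\mathcal{Y}}_{k+1}-\bm{\mathcal{J}}_{k+1})+\bm{\mathcal{Q}}_k\in\lambda\,\partial\|\bm{\mathcal{J}}_{k+1}\|_{\Sp}^p$. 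Combining it with the update $\bm{\mathcal{Q}}_{k+1}=\bm{\mathcal{Q}}_k+\mu_k(\bm{\mathcal{Y}}_{k+1}-\bm{\mathcal{J}}_{k+1})$ yields the key inclusion $\bm{\mathcal{Q}}_{k+1}\in\lambda\,\partial\|\bm{\mathcal{J}}_{k+1}\|_{\Sp}^p$. By Lemma \ref{lewis}, applied slice-wise in the Fourier domain, this subgradient equals $\bm{\mathcal{U}}*\mathrm{ifft}(\mathrm{diag}(\theta))*\bm{\mathcal{V}}^{\mathrm T}$ with $\theta_j=p\,\sigma_j^{p-1}$, so its Frobenius norm is controlled by $\max_j p\,\sigma_j^{p-1}$. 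Because the GST operator underlying Theorem \ref{T1} maps each singular value either exactly to $0$ or to a value bounded below by a fixed positive threshold, the surviving $\sigma_j$ stay away from $0$ and $\theta_j$ is bounded; hence $\{\bm{\mathcal{Q}}_k\}$ is bounded. Rearranging the multiplier update as $\bm{\mathcal{J}}_k=\bm{\mathcal{Y}}_k+(\bm{\mathcal{Q}}_{k-1}-\bm{\mathcal{Q}}_k)/\mu_{k-1}$ and using $\mu_{k-1}\ge\mu_0>0$ then bounds $\bm{\mathcal{J}}_k$, finishing part (1).

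For part (2) I would apply Bolzano--Weierstrass to extract a convergent subsequence $P_{k_j}\to P^\ast=\{\bm{\mathcal{Y}}^\ast,\bm{\mathcal{J}}^\ast,\bm{\mathcal{Q}}^\ast\}$ and verify each KKT condition of (\ref{objfinal}) in the limit. Primal feasibility $\bm{\mathcal{Y}}^\ast=\bm{\mathcal{J}}^\ast$ follows from $\|\bm{\mathcal{Y}}_{k+1}-\bm{\mathcal{J}}_{k+1}\|_F=\|\bm{\mathcal{Q}}_{k+1}-\bm{\mathcal{Q}}_k\|_F/\mu_k$, whose numerator is bounded while $\mu_k$ grows, so the residual vanishes. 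Stationarity in $\bm{\mathcal{J}}$ is obtained by passing to the limit in $\bm{\mathcal{Q}}_{k+1}\in\lambda\,\partial\|\bm{\mathcal{J}}_{k+1}\|_{\Sp}^p$ using the outer semicontinuity of the limiting subdifferential, giving $\bm{\mathcal{Q}}^\ast\in\lambda\,\partial\|\bm{\mathcal{J}}^\ast\|_{\Sp}^p$. Stationarity in $\mathbf{Y}^{(v)}$ and the weight optimality for $\alpha_v$ follow by taking limits in the subproblem conditions (\ref{Solveyi}) and (\ref{alpha_opt}); assembling these relations shows $P^\ast$ is a stationary KKT point.

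The main obstacle is precisely the boundedness of $\bm{\mathcal{Q}}_k$: unlike the convex nuclear-norm case, whose subgradient has singular values at most $1$, the non-convex Schatten $p$-norm with $0<p<1$ has gradient entries $p\,\sigma_j^{p-1}$ that blow up as $\sigma_j\to0^+$. The whole argument therefore rests on the thresholding property of the GST solver in Theorem \ref{T1} guaranteeing a uniform positive lower bound on the nonzero singular values of $\bm{\mathcal{J}}_{k+1}$; making this bound explicit and independent of $k$ is the delicate point. A secondary subtlety is the cap $\mu_{\max}$: strictly, $\mu_k$ saturates rather than diverging, so the residual is only driven to $O(1/\mu_{\max})$ rather than exactly to $0$; I would either treat $\mu_{\max}$ as the effective infinite-penalty regime or, more carefully, combine boundedness of $P_k$ with vanishing successive-iterate differences to recover exact feasibility in the limit.
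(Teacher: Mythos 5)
Your proposal is correct in substance and reaches the same two pivotal relations the paper uses --- the identity $\bm{\mathcal{Q}}_{k+1}=\lambda\nabla_{\bm{\mathcal{J}}}\|\bm{\mathcal{J}}_{k+1}\|^p_{\Sp}$ obtained by combining the first-order condition of \eqref{SolveJ} with the multiplier update, and the limiting KKT system $\bm{\mathcal{Y}}^*=\bm{\mathcal{J}}^*$, $\bm{\mathcal{Q}}^*=\lambda\nabla_{\bm{\mathcal{J}}}\|\bm{\mathcal{J}}^*\|^p_{\Sp}$, $\mathbf{Q}^{(v)*}=2\alpha_v^r\mathbf{Y}^{(v)*}\mathbf{D}^{(v)}$ --- but your part (1) travels a genuinely different and more elementary route. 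The paper tames the singularity of $p\sigma^{p-1}$ at $\sigma=0$ by an explicit $\epsilon$-smoothing of the scalar derivative, $\partial|\eta|^p\approx p\eta/\max\{\epsilon^{2-p},|\eta|^{2-p}\}$, yielding the uniform bound $p/\epsilon^{1-p}$ and hence boundedness of $\bm{\mathcal{Q}}_k$ via Lemma~\ref{lewis} and the DFT chain rule; you instead invoke the hard-thresholding property of the GST operator behind Theorem~\ref{T1}, so the surviving singular values of $\bm{\mathcal{J}}_{k+1}$ stay away from zero. Your version is closer to what the algorithm actually computes (the paper's smoothing is an approximation never reconciled with the exact update), but, as you yourself flag, the GST threshold scales like $(\lambda/\mu_k)^{1/(2-p)}$ and shrinks as $\mu_k$ grows, so uniformity in $k$ holds precisely because Algorithm~\ref{A1} caps $\mu_k$ at $\mu_{\max}$; that should be said explicitly. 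Once $\bm{\mathcal{Q}}_k$ is bounded, you get $\bm{\mathcal{Y}}_k$ for free from the indicator constraint ($\|\bm{\mathcal{Y}}_k\|_F=\sqrt{NV}$, cleaner than the paper's detour through positive semi-definiteness of $\mathbf{D}^{(v)}$) and $\bm{\mathcal{J}}_k$ by rearranging the update with $\mu_{k-1}\ge\mu_0$, thereby bypassing the paper's entire augmented-Lagrangian descent argument \eqref{eq:Lk_ieq}--\eqref{eq:Ln_bdd} and its summability condition $\sum_k(\mu_k+\mu_{k-1})/(2\mu_{k-1}^2)<\infty$; this is a genuine gain, since that series actually diverges once $\mu_k$ saturates at $\mu_{\max}$, so your route is the one consistent with the algorithm as stated. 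The residual cost surfaces in part (2): with $\mu_k$ capped, primal feasibility $\bm{\mathcal{Y}}^*=\bm{\mathcal{J}}^*$ does not follow from $\mu_k\to\infty$, and your fallback (vanishing successive differences of $\bm{\mathcal{Q}}_k$ along the subsequence) still wants a supporting argument --- but the paper's own one-line deduction of $\bm{\mathcal{J}}^*=\bm{\mathcal{Y}}^*$ from the update rule has exactly the same gap, so on that point the two proofs stand on equal footing.
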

	\subsubsection{Proof of the 1st part}
	To minimize $\mathcal{J}$ at step $k+1$ in \eqref{SolveJ}, the optimal $\mathcal{J}_{k+1}$ needs to satisfy the first-order optimal condition
	$\lambda\nabla_{\bm{\mathcal {J}}}\|\bm{\mathcal {J}}_{k+1}\|^p_{\Sp}+\mu_k(\bm{\mathcal {J}}_{k+1}-\bm{{\mathcal{Y}}}_{k+1}-\dfrac{1}{\mu_k}\bm{{\mathcal{Q}}}_{k})=0.$
	
	Recall that when $0<p<1$, in order to overcome the singularity of $(|\eta|^p)'=p\eta/|\eta|^{2-p}$ near $\eta=0$, we consider for $0<\epsilon\ll 1$ the approximation
	$$
	\partial |\eta|^p\approx\dfrac{p\eta}{\max\{\epsilon^{2-p},|\eta|^{2-p}\}}.
	$$
	Letting $\overline {\bm{\mathcal {J}}}^{(i)}={\overline {\bm{\mathcal {U}}}}^{(i)}\mathrm{diag}\left(\sigma_j(\overline {\bm{\mathcal {J}}}^{(i)})\right){\overline {\bm{\mathcal {V}}}}^{(i)\mathrm{H}},$ then it follows from Defn.~\ref{Sp-norm} and Lemma~\ref{lewis} that \begin{align*}\frac{\partial \|{\overline {\bm{\mathcal {J}}}}^{(i)}\|^p_{\Sp}}{\partial{\overline {\bm{\mathcal {J}}}}^{(i)}}={\overline {\bm{\mathcal {U}}}}^{(i)}\mathrm{diag}\left(\dfrac{p\sigma_j(\overline {\bm{\mathcal {J}}}^{(i)})}{\max\{\epsilon^{2-p},|\sigma_j(\overline {\bm{\mathcal {J}}}^{(i)})|^{2-p}\}}\right){\overline {\bm{\mathcal {V}}}}^{(i)\mathrm{H}}.\end{align*}
	And  then one can obtain
	\begin{align*}
		&\dfrac{p\sigma_j(\overline {\bm{\mathcal {J}}}^{(i)})}{\max\{\epsilon^{2-p},|\sigma_j(\overline {\bm{\mathcal {J}}}^{(i)})|^{2-p}\}}\leq \dfrac{p}{\epsilon^{1-p}}\\&\Longrightarrow \left\|\frac{\partial \|{\overline {\bm{\mathcal {J}}}}^{(i)}\|^p_{\Sp}}{\partial{\overline {\bm{\mathcal {J}}}}^{(i)}}\right\|^2_F\leq \sum^{N}_{i=1} \dfrac{p^2}{\epsilon^{2(1-p)}}.
	\end{align*}So
	$\frac{\partial \|{\overline {\bm{\mathcal {J}}}}\|^p_{\Sp}}{\partial{\overline {\bm{\mathcal {J}}}}}$
	is bounded.
	
	Let us denote $\widetilde{\mathbf{F}}_{V} = \frac{1}{\sqrt{V}}\mathbf{F}_{V}$, $\mathbf{F}_{V}$ is the discrete Fourier transform matrix of size $V\times V$,\  $\mathbf{F}^{\mathrm{H}}_{V}$ denotes its conjugate transpose. For $\bm{\mathcal {J}}=\overline {\bm{\mathcal {J}}}\times_3 \widetilde{\mathbf{F}}_{V}$ and using the chain rule in matrix calculus, one can obtain that $$\nabla_{\bm{\mathcal {J}}}\|\bm{\mathcal {J}}\|^p_{\Sp}=\frac{\partial \|{\bm{\mathcal {J}}}\|^p_{\Sp}}{\partial{\overline {\bm{\mathcal {J}}}}}\times_3 \widetilde{\mathbf{F}}_{V}^{\mathrm{H}}$$ is bounded.
	
	And it follows that
	\begin{align*}
		&\bm{{\mathcal{Q}}}_{k+1}=\bm{{\mathcal{Q}}}_{k}+\mu_{k}(\bm{{\mathcal{Y}}}_{k+1}-\bm{\mathcal {J}}_{k+1})\\&\Longrightarrow \lambda\nabla_{\bm{\mathcal {J}}}\|\bm{\mathcal {J}}_{k+1}\|^p_{\Sp}=\bm{{\mathcal{Q}}}_{k+1},
	\end{align*}
	$\{\bm{{\mathcal{Q}}}_{k+1}\}$ appears to be bounded.
	
	Moreover, by using the updating rule $$\bm{{\mathcal{Q}}}_{k}=\bm{{\mathcal{Q}}}_{k-1}+\mu_{k-1}(\bm{\mathcal {Y}}_{k}-\bm{\mathcal {J}}_{k}),$$  we can deduce
	\begin{align}
		\label{eq:Lk_ieq}
		&\mathcal{L}_{\mu_k} \left(\bm{{\mathcal{Y}}}_{k+1}, \bm{{\mathcal{J}}}_{k+1}, \bm{{\mathcal{Q}}}_k\right) \leq  \mathcal{L}_{\mu_k} \left(\bm{{\mathcal{Y}}}_{k}, \bm{{\mathcal{J}}}_{k}, \bm{{\mathcal{Q}}}_k\right) \\&= \mathcal{L}_{\mu_{k-1}} \left(\bm{{\mathcal{Y}}}_{k}, \bm{{\mathcal{J}}}_{k}; \bm{{\mathcal{Q}}}_{k-1}\right)\nonumber\\
		&+\frac{\mu_k+\mu_{k-1}}{2\mu^2_{k-1}}\|\bm{{\mathcal{Q}}}_k-\bm{{\mathcal{Q}}}_{k-1}\|_F^2+ \frac{\|\bm{{\mathcal{Q}}}_{k}\|_F^2}{2\mu_k}- \frac{\|\bm{{\mathcal{Q}}}_{k-1}\|_F^2}{2\mu_{k-1}}.\nonumber
	\end{align}
	Thus, summing two sides of \eqref{eq:Lk_ieq} from $k=1$ to $n$, we have
	\begin{equation}
		\begin{aligned}
			&\mathcal{L}_{\mu_n} \left(\bm{{\mathcal{Y}}}_{n+1}, \bm{{\mathcal{J}}}_{n+1}, \bm{{\mathcal{Q}}}_n\right) \leq \mathcal{L}_{\mu_{0}} \left(\bm{{\mathcal{Y}}}_{1}, \bm{{\mathcal{J}}}_{1}, \bm{{\mathcal{Q}}}_{0}\right) \\
			+&\frac{\|\bm{{\mathcal{Q}}}_{n}\|_F^2}{2\mu_n}- \frac{\|\bm{{\mathcal{Q}}}_{0}\|_F^2}{2\mu_{0}}+\sum_{k=1}^n\left(\frac{\mu_k+\mu_{k-1}}{2\mu^2_{k-1}}\|\bm{{\mathcal{Q}}}_k-\bm{{\mathcal{Q}}}_{k-1}\|_F^2\right).
			\label{eq:Lk_sum}
		\end{aligned}
	\end{equation}
	Observe that
	\[
	\sum_{k=1}^{\infty}\frac{\mu_k+\mu_{k-1}}{2\mu_{k-1}^2}<\infty,
	\]
	we have the right-hand side of \eqref{eq:Lk_sum} is finite and thus $\mathcal{L}_{\mu_n} \left(\bm{{\mathcal{Y}}}_{n+1}, \bm{{\mathcal{J}}}_{n+1}, \bm{{\mathcal{Q}}}_n\right)$ is bounded. Notice
	\begin{align}
		\label{eq:Ln_bdd}
		&\mathcal{L}_{\kappa_n} \left(\bm{{\mathcal{Y}}}_{n+1}, \bm{{\mathcal{J}}}_{n+1}, \bm{{\mathcal{Q}}}_n\right) =\sum_{v=1}^V\mathrm{tr}(\textbf{Y}_{n+1}^{(v)}\textbf{D}^{(v)}\textbf{Y}_{n+1}^{(v)\mathrm{T}})\nonumber\\
		&+\lambda\|\bm{\mathcal {J}}_{n+1}\|^p_{\Sp} + \frac{\mu_{n}}{2}\|\bm{\mathcal {Y}}_{n+1}-\bm{\mathcal {J}}_{n+1}+\frac{\bm{{\mathcal{Q}}}_{n}}{\mu_{n}}\|_F^2,
	\end{align}
	and each term  of \eqref{eq:Ln_bdd} is nonnegative, following from the boundedness of $\mathcal{L}_{\mu_n} \left(\bm{{\mathcal{Y}}}_{n+1}, \bm{{\mathcal{J}}}_{n+1}, \bm{{\mathcal{Q}}}_n\right)$, we can deduce each term of \eqref{eq:Ln_bdd} is bounded. And
	$\|\bm{{\mathcal{J}}}_{n+1}\|^p_{\Sp}$ being bounded implies that all singular values of $\bm{{\mathcal{J}}}_{n+1}$ are bounded and hence $\|\bm{{\mathcal{J}}}_{n+1}\|^2_F$ (the sum of squares of singular values) is  bounded. Therefore, the sequence $\{\bm{{\mathcal{J}}}_k\}$ is  bounded. Considering the positive semi-definiteness of matrix $\textbf{D}^{(v)}$, we can deduce $\textbf{Y}^{(v)}_{n+1}$ is bounded and thus the sequence $\{\bm{{\mathcal{Y}}}_k\}$ is  bounded.
	
	\subsubsection{Proof of the 2nd part}
	
	From Weierstrass-Bolzano theorem, there exists at least one accumulation point of the sequence $P_{k}$. We denote one of the points $P^*=\{\bm{{\mathcal{Y}}}^*, \bm{{\mathcal{J}}}^*, \bm{{\mathcal{Q}}}^*\}$. Without loss of generality, we assume $\{\mathcal{P}_{k}\}^{+\infty}_{k=1}$ converge to $P^*.$
	
	Note that from the updating rule for $\bm{{\mathcal{Q}}}$, we have $$\bm{{\mathcal{Q}}}_{k+1}=\bm{{\mathcal{Q}}}_{k}+\mu_{k}(\bm{{\mathcal{Y}}}_{k}-\bm{{\mathcal{J}}}_{k})\Longrightarrow \bm{{\mathcal{J}}}^*=\bm{{\mathcal{Y}}}^*.$$

	In the $\bm{\mathcal {J}}$-subproblem, we have $$\lambda\nabla_{\bm{\mathcal {J}}}\|\bm{\mathcal {J}}_{k+1}\|^p_{\Sp}=\bm{{\mathcal{Q}}}_{k+1}\Longrightarrow\bm{{\mathcal{Q}}}^*=\lambda\nabla_{\bm{\mathcal {J}}}\|\bm{\mathcal {J}}^*\|^p_{\Sp}.$$
	
	In \eqref{SolvingY}, we have
	$$\alpha_\emph{\textrm{v}}^\emph{\textrm{r}}\partial \mathrm{tr}(\textbf{Y}^{(v)}_{k+1}\textbf{D}^{(v)}\textbf{Y}^{(v)T}_{k+1})-\mu_k(\textbf{J}_{k+1}^{(v)}-\textbf{Y}_{k+1}^{(v)}+\textbf{Q}_{k}^{(v)}/\mu_k)=0.$$
	Now by the updating rule ${\bm{{\mathcal Q}}}{\rm{ = }}{\bm{{\mathcal Q}}}{\rm{ + }}\mu {\rm{(}}{\bm{{\mathcal Y}}}{\rm{ - }}{\bm{{\mathcal J}}}{\rm{)}}$, we can see
	\begin{align*}\label{eq:parLk+1}
		\textbf{Q}_{k+1}^{(v)} = 2\alpha_\emph{\textrm{v}}^\emph{\textrm{r}}\textbf{Y}^{(v)}_{k+1}\textbf{D}^{(v)}\Longrightarrow\textbf{Q}^{(v)*} = 2\alpha_\emph{\textrm{v}}^\emph{\textrm{r}}\textbf{Y}^{(v)*}\textbf{D}^{(v)},
	\end{align*}
	Therefore, one can see that the sequences $\bm{{\mathcal{Y}}}^*, \bm{{\mathcal{J}}}^*, \bm{{\mathcal{Q}}}^*$ satisfy the KKT conditions of the Lagrange function \eqref{objfinal}.

\section{Experiments}
We evaluate our model on two toy datasets and six benchmark datasets through some experiments implemented on a Windows 10 desktop computer with a 2.40GHz Intel Xeon Gold 6240R CPU, 64 GB RAM, and MATLAB R2021a (64-bit).

\subsection{Experiments on A Toy Dataset}

Experiments in this section are conducted to verify the effectiveness of our method for non-linearly separable clusters. First, we construct two toy datasets called two-moon and three-ring, respectively, which contain 2 views each dataset. The two-moon dataset contains 200 samples belonging to 2 non-linearly separable clusters. Each sample is represented by a 2D coordinate, and the second view is obtained by performing FFT on the first view. The three-ring dataset contains 200 samples belonging to 3 non-linearly separable clusters, with each cluster represented by a 2D coordinate. The second view is obtained by performing FFT on the first view. Figure~\ref{result_ConsD} shows the clustering results of our method with the Euclidean distance and the distance constructed by our proposed function (\ref{consD}), respectively. The visualization indicates that the performance of our method  is significantly improved when using our proposed function (\ref{consD}) to calculate the distance between data points, compared with the Euclidean distance. This indicates that our method is capable of separating non-linearly separable clusters well in the input space.


\begin{figure}[!t]
	\centering
	\subfigure[Euclidean distance on three-ring]{
		\includegraphics[width=0.47\linewidth]{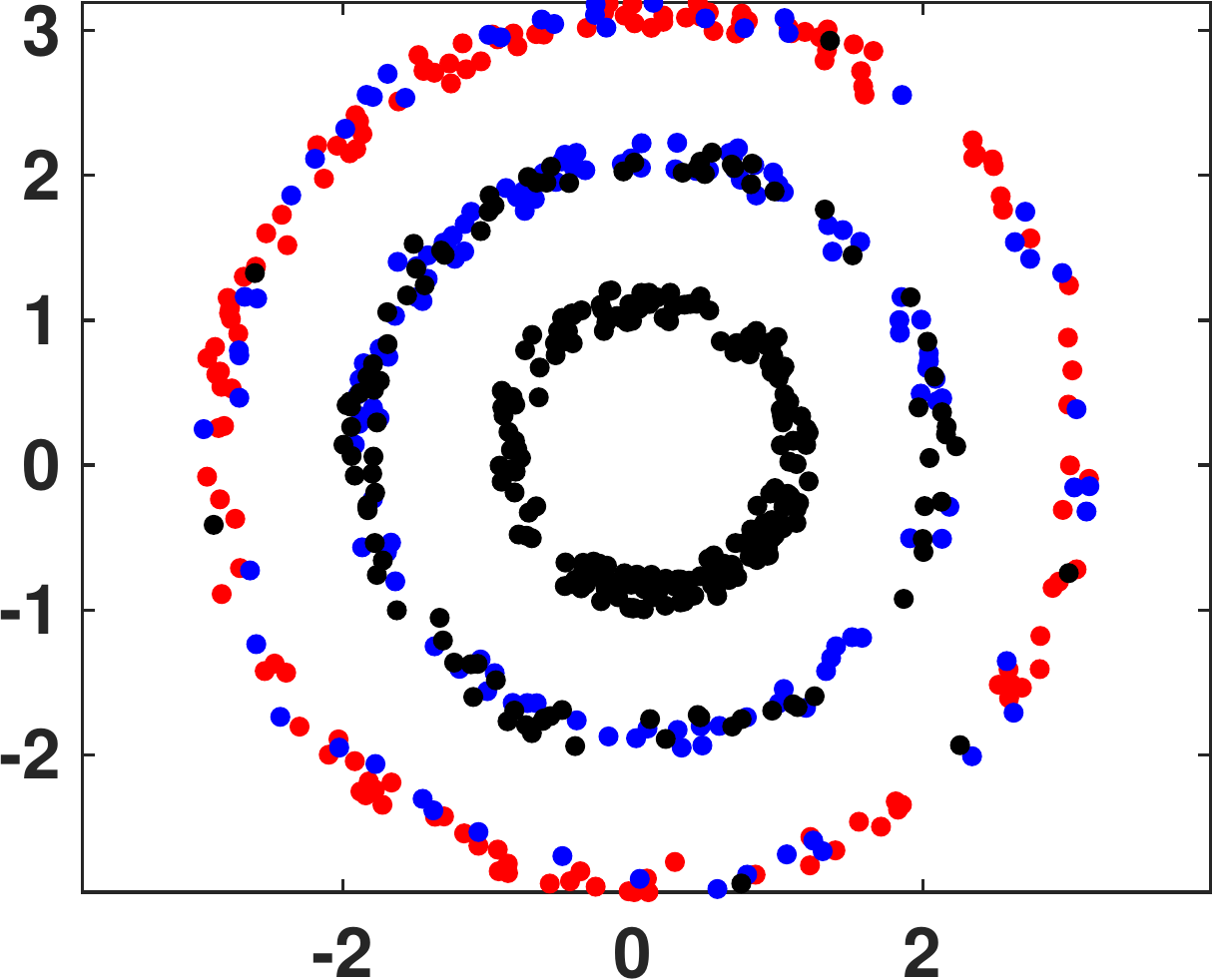}
	}
	\subfigure[Our function on three-ring]{
		\includegraphics[width=0.47\linewidth]{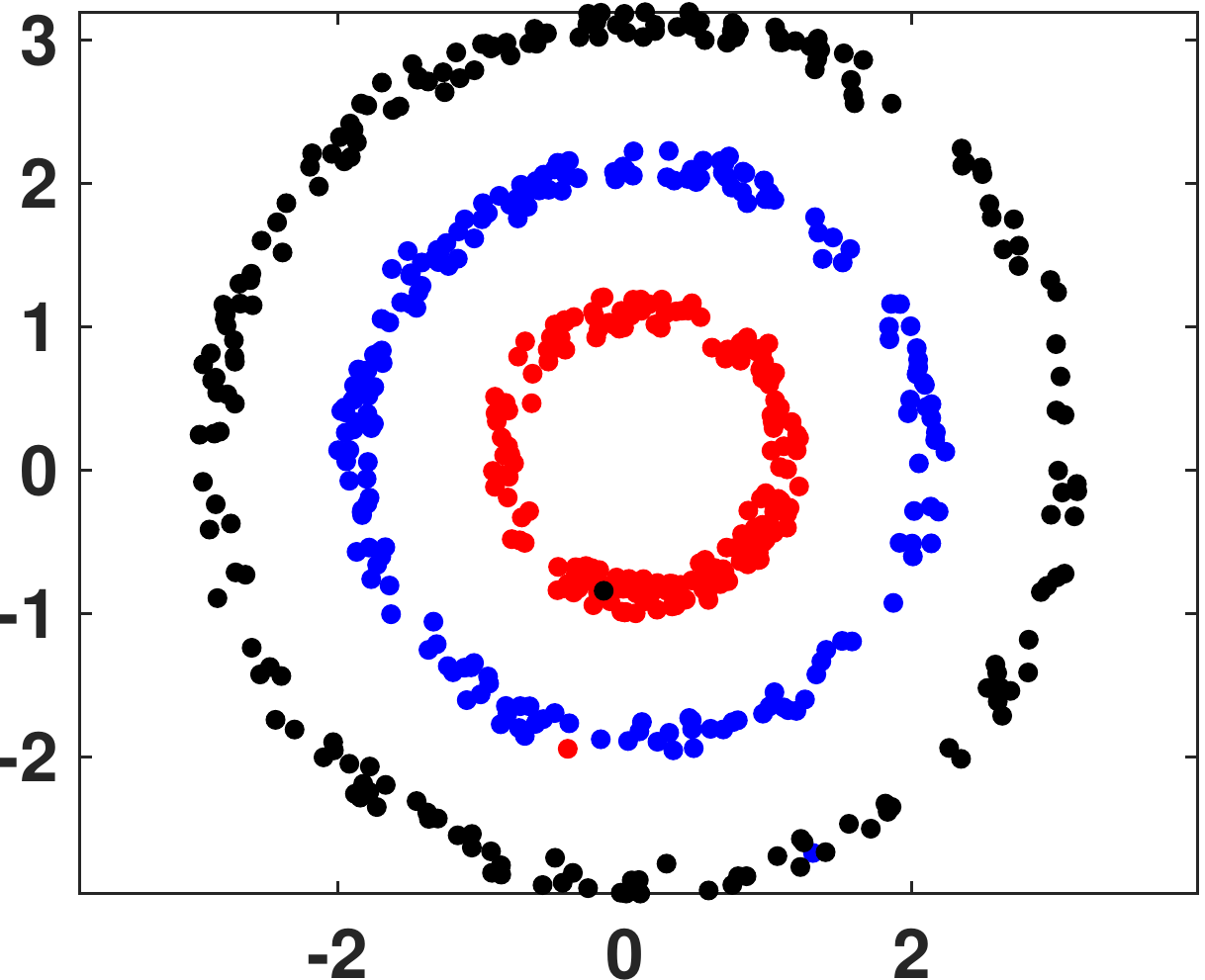}
	}
	\subfigure[Euclidean distance on two-moon]{
		\includegraphics[width=0.47\linewidth]{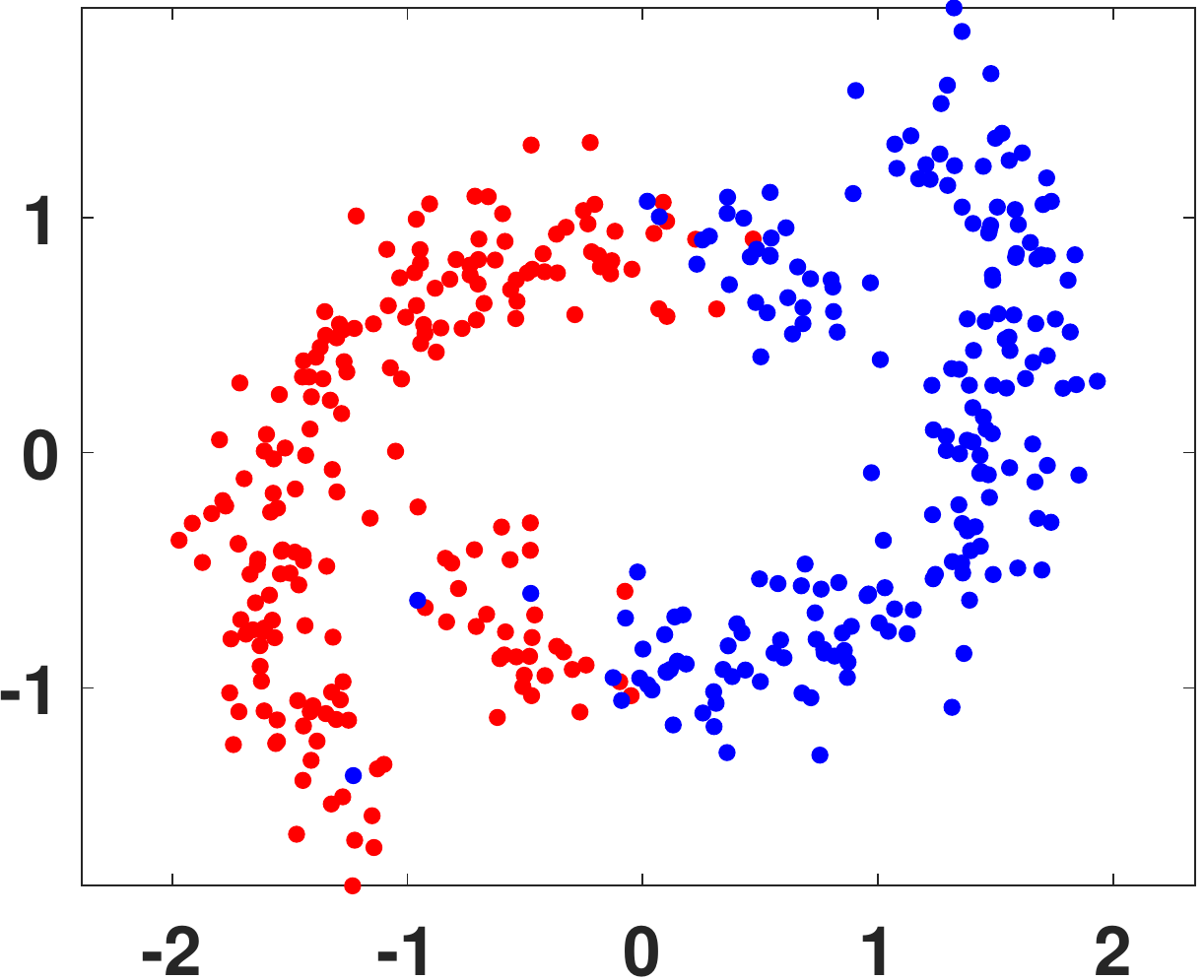}
	}
	\subfigure[Our function on two-moon]{
		\includegraphics[width=0.47\linewidth]{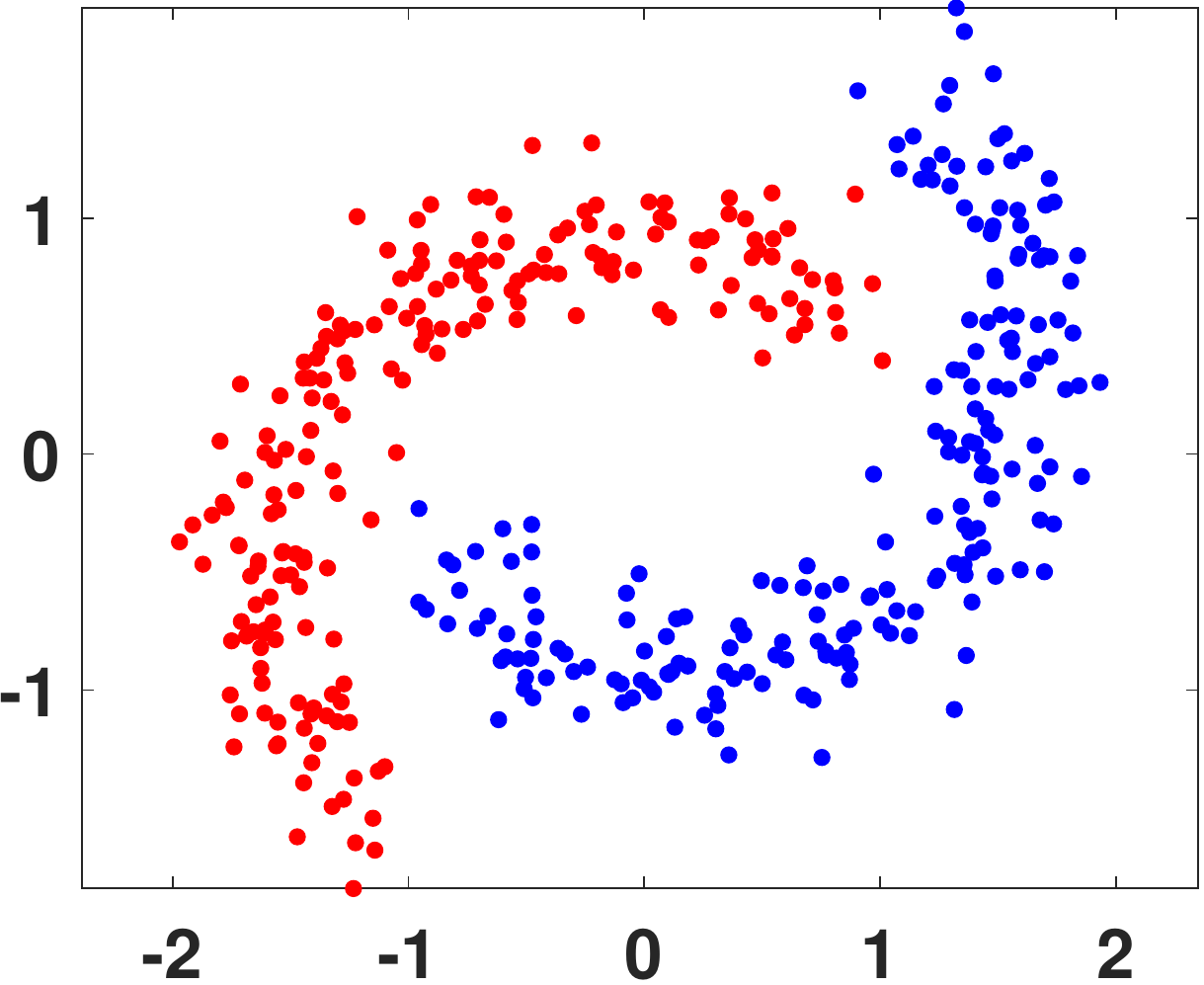}
	}
	\caption{Clustering performance of our proposed method with different distance matrices on two toy datasets.}
	\label{result_ConsD}
\end{figure}

\subsection{Experimental on Benchmark Datasets}
\subsubsection{Datasets:}
We do experiments to evaluate the effectiveness of our method on the following widely used six multi-view benchmark datasets: MSRC~\cite{WinnJ05}, ORL\footnote{\url{http://www.cl.cam.ac.uk/research/dtg/attarchive/facedatabase.html}}, HW~\cite{Dua:2019}, and Mnist4~\cite{Deng12} are common real benchmark datasets, while Reuters~\cite{ApteDW94} and NUS-WIDE~\cite{ChuaTHLLZ09} are two large-scale datasets. Table~\ref{infoData} provides detailed information for these datasets, where the numbers in parentheses indicate feature dimensions.

\begin{table*}[!h]
	\centering
	\begin{center}
		\caption{The information of datasets.}\label{infoData}
			\begin{tabular}{l | c c c c c c}
				\toprule
				Datasets    & MSRC    & ORL    & HW     & Mnist4  & Reuters   & NUS-WIDE  \\\midrule
				Data Type   & images  &images  &images  &images   &documents  &images     \\
				Size        & 210     &400     &2000    &4000     &18758      &30000      \\
				Classes     &7        &40      &10      &4        &6          &31         \\
				View\#1     &CM(24)    &Intensity(4096) &FOU(76)   &ISO(30)  &English(21531)  &CH(64)    \\
				View\#2     &HOG(576)  &LBP(3304)       &FAC(216)  &LDA(9)   &France(24892)   &CM(225)   \\
				View\#3 	&GIST(512) &Gabor(6750)     &ZER(47)   &DNPE(30) &German(34251)   &CORR(144) \\
				View\#4     &LBP(256)  &-               &MOR(6)    &-        &Italian(15506)  &EDH(73)   \\
				View\#5     &CENT(254) &-               &-         &-        &Spanish(11547)  &WT(128)   \\
				\bottomrule
		\end{tabular}
	\end{center}
\end{table*}

\subsubsection{Competing Algorithms:}
The following representative methods are adopted to evaluate the performance of our model:
\begin{itemize}
\item \textbf{SC}~\cite{NgJW01}, Classical single view spectral clustering;
\item \textbf{RMKMC}~\cite{2013MultiR}, Multi-view \emph{k}-means clustering on big data;
\item \textbf{FRMVK}~\cite{YangS19a}, Feature-reduction multi-view \emph{k}-means;
\item \textbf{RDEKM}~\cite{XuHNL17}, Re-weighted discriminatively embedded \emph{k}-means for multi-view clustering;
\item \textbf{MVASM}~\cite{HanXNL22}, Multi-view \emph{k}-means clustering with adaptive sparse memberships and weight allocation;
\item \textbf{WMCFS}~\cite{XuWL16}, Weighted multi-view clustering with feature selection;
\item \textbf{Co-Reg}~\cite{KumarRD11}, Co-regularized multi-view SC;
\item \textbf{CSMSC}~\cite{Luo2018Consistent}, Consistent and specific multi-view subspace clustering;
\item \textbf{MVGL}~\cite{ZhanZGW18}, Graph learning for multi-view clustering;
\item \textbf{LTCPSC}~\cite{XuZXGG20}, Low-rank tensor constrained co-regularized multi-view spectral clustering;
\item \textbf{ETLMSC}~\cite{WuLZ19}, Essential tensor learning for multi-view spectral clustering.
\end{itemize}

\subsubsection{Metrics:}
The widely used 3 metrics are applied to evaluate the performance of our method, (1) Accuracy (ACC)~\cite{CaiHH05}; (2) Normalized Mutual Information (NMI)~\cite{EstevezTPZ09}; (3) Purity~\cite{VarshavskyLH05}. These indicators measure the consistency between the true labels and the predicted labels from different perspectives, and a higher value of indicator means better clustering performance.

\subsubsection{Parameters:}
There are five parameters in our model, $\lambda$, $r$, $p$, and $\Omega$. In the experiments, we set $\lambda\in$[0.1, 1, 5, 10, 50, 100], $p$ from 0.1 to 1 with the interval 0.1, $r$ from 3 to 10 with the interval 1, and $\Omega \in$[0.0001, 0.0005, 0.001, 0.003, 0.005, 0.01, 0.03, 0.05, 0.1]. For the anchor ratio $\theta$, we set $\theta = 0.5$ on the MSRC, ORL, Mnist4, and HW databases, $\theta = 0.1$ on the Reuters database, and $\theta = 0.02$ on the NUS-WIDE database. The small anchor ratio of Reuters and NUS-WIDE are set to reduce memory consumption. Because $\Omega$ is only relevant to the adjacency matrix, we tune $\Omega$ while keeping other parameters fixed, and tune other parameters with fixed $\Omega$.

\subsubsection{Results}
We do experiments on six datasets and record the metrics comparison in Tables.~\ref{result12} and~\ref{result56}. SC (best) records the best results among all views. The following can be observed:

First, most multi-view clustering methods achieve better results than single-view SC. It is because the information embedded in different views is complementary and the multi-view methods use this information well. Second, compared with the multi-view $k$-means RMKMC, FRMVK, RDEKM, and MVASM, our method achieves better performance. What's more, the results of our method are better than the state-of-the-art multi-view spectral clustering algorithms. This is because our method is suitable for both linearly and non-linearly separate clusters in the input space, takes full advantage of the complementary information among multi-views, and the considers of the difference among different views. Finally, on two large-scale datasets Reuters and NUS-WIDE, some methods such as RDEKM and CSMSC cannot work on computers with small memory due to the out-of-memory. However, in this case, our method can work and obtain great results by setting a low anchor rate, as shown in Table.~\ref{result56}, especially in NUS-WIDE, our method significantly and consistently outperforms all competitors, which demonstrates the effectiveness on large-scale datasets.

\begin{table*}[!t]
	\centering
	\caption{The results on MSRC, HW, ORL, and Mnist4 datasets.}\label{result12}
	\begin{center}
			\begin{tabular}{l | l l l | l l l | l l l | l l l}
				\toprule
				Datasets &\multicolumn{3}{c|}{MSRC} &\multicolumn{3}{c|}{HW} &\multicolumn{3}{c|}{ORL} &\multicolumn{3}{c}{Mnist4}\\
				\midrule
				Methods & ACC & NMI & Purity & ACC & NMI & Purity & ACC & NMI & Purity & ACC & NMI & Purity \\
				\midrule
				SC (best)&0.663 &0.534 &0.675 &0.639 &0.616 &0.653 &0.727 &0.868 &0.762 &0.713 &0.558	&0.713 \\
				RMKMC &0.700 &0.604 &0.700 &0.804 &0.785 &0.839 &0.543 &0.749 &0.620 &0.895 &0.739 &0.895 \\
				FRMVK &0.391 &0.292 &0.391 &0.790 &0.777 &0.806 &0.368 &0.589 &0.383 &0.882 &0.715 &0.882\\
				RDEKM &0.900 &0.808 &0.900 &0.764 &0.755 &0.793 &0.783 &0.578 &0.765 &0.715 &0.881 &0.714\\
				MVASM &0.604 &0.619 &0.644 &0.726 &0.842 &0.644 &0.393 &0.575 &0.539 &0.892 &0.734 &0.892 \\
				WMCFS &0.566 &0.600 &0.619 &0.679 &0.590 &0.641 &0.630 &0.787 &0.690 &0.892 &0.729 &0.892 \\
				CSMSC &0.758 &0.735 &0.793 &0.806 &0.793 &0.867 &0.857 &0.935 &0.882 &0.643 &0.645 &0.832 \\
				MVGL  &0.690 &0.663 &0.733 &0.811 &0.809 &0.831 &0.765 &0.871 &0.815 &0.912 &0.785 &0.910 \\
				Co-Reg&0.635 &0.578 &0.659 &0.784 &0.758 &0.795 &0.668 &0.824 &0.713 &0.785 &0.602 &0.786	\\
				ETLMSC&0.962 &0.937 &0.962 &0.938 &0.893 &0.938 &0.958 &0.991 &0.970 &0.934 &0.847 &0.934 \\
				LTCPS &0.981 &0.957 &0.981 &0.920 &0.869 &0.920 &0.981 &0.994 &0.983 &0.929 &0.813 &0.929 \\
				Ours  &\textbf{1.000}   &\textbf{1.000}    &\textbf{1.000}   &\textbf{0.994}  &\textbf{0.986}  &\textbf{0.994} &\textbf{1.000 }   &\textbf{1.000}    &\textbf{1.000}   &\textbf{0.992}  &\textbf{0.969}  &\textbf{0.992} \\
				\bottomrule
		\end{tabular}
	\end{center}
\end{table*}

\begin{table}[!t]
	\centering
	\caption{The results on Reuters and NUS-WIDE datasets.}\label{result56}
	\begin{center}
		\resizebox{1.0\columnwidth}{!}{
			\begin{tabular}{l| l l l| l l l}
				\toprule
				Datasets &\multicolumn{3}{c|}{Reuters} &\multicolumn{3}{c}{NUS-WIDE}\\
				\midrule
				Methods  &ACC   &NMI   &Purity &ACC  &NMI   &Purity \\
				\midrule
				SC (best)&0.269 &0.002 &0.272 &0.131 &0.019 &0.140  \\
				RMKMC    &0.422 &0.259 &0.531 &0.125 &0.123 &0.221  \\				
				FRMVK    &0.272 &0.000 &0.272 &0.124 &0.095 &0.200  \\
				RDEKM    &OM    &OM    &OM    &OM    &OM    &OM     \\
				MVASM    &0.458 &0.136 &0.814 &0.141 &0.096 &0.254  \\
				WMCFS    &0.427 &0.278 &0.530 &OM    &OM    &OM     \\
				CSMSC    &OM    &OM    &OM    &OM    &OM    &OM     \\
				MVGL     &0.271 &0.021 &0.281 &OM    &OM    &OM     \\
				Co-Reg   &0.563 &0.326 &0.552 &0.119 &0.114 &0.214  \\
				ETLMSC   &OM    &OM    &OM    &OM    &OM    &OM     \\
				LTCPS    &OM    &OM    &OM    &OM    &OM    &OM     \\
				Ours  &\textbf{0.682 } &\textbf{0.638} &\textbf{0.824} &\textbf{0.510} &\textbf{0.708} &\textbf{0.673} \\
				\bottomrule
		\end{tabular}}
	\end{center}
\end{table}

\begin{figure}[!t]
	\centering
	\subfigure[MSRC]{
		\begin{minipage}[t]{0.47\linewidth}
			\centering
			\includegraphics[width=1.0\linewidth]{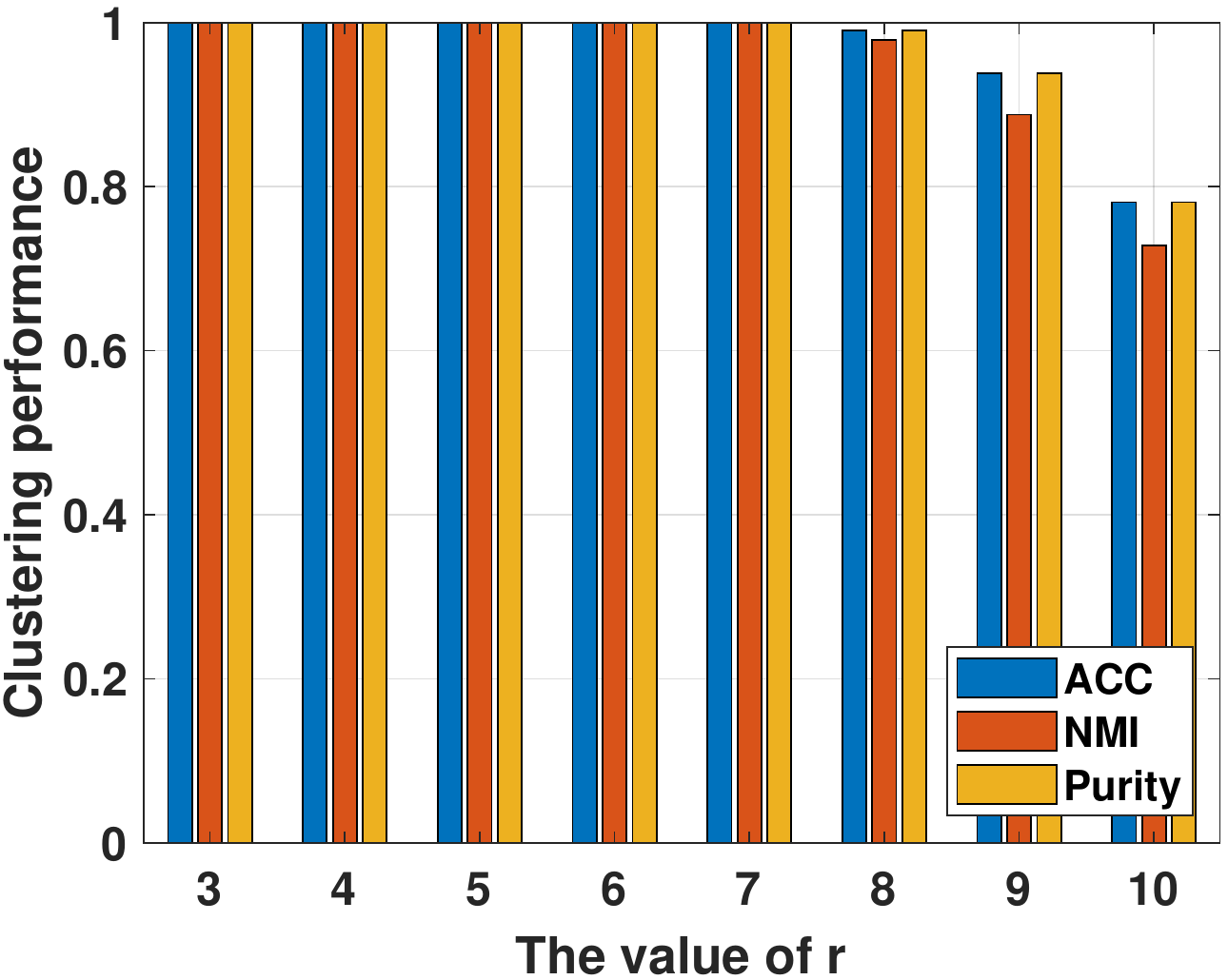}
		\end{minipage}
	}
	\subfigure[ORL]{
		\begin{minipage}[t]{0.47\linewidth}
			\centering
			\includegraphics[width=1.0\linewidth]{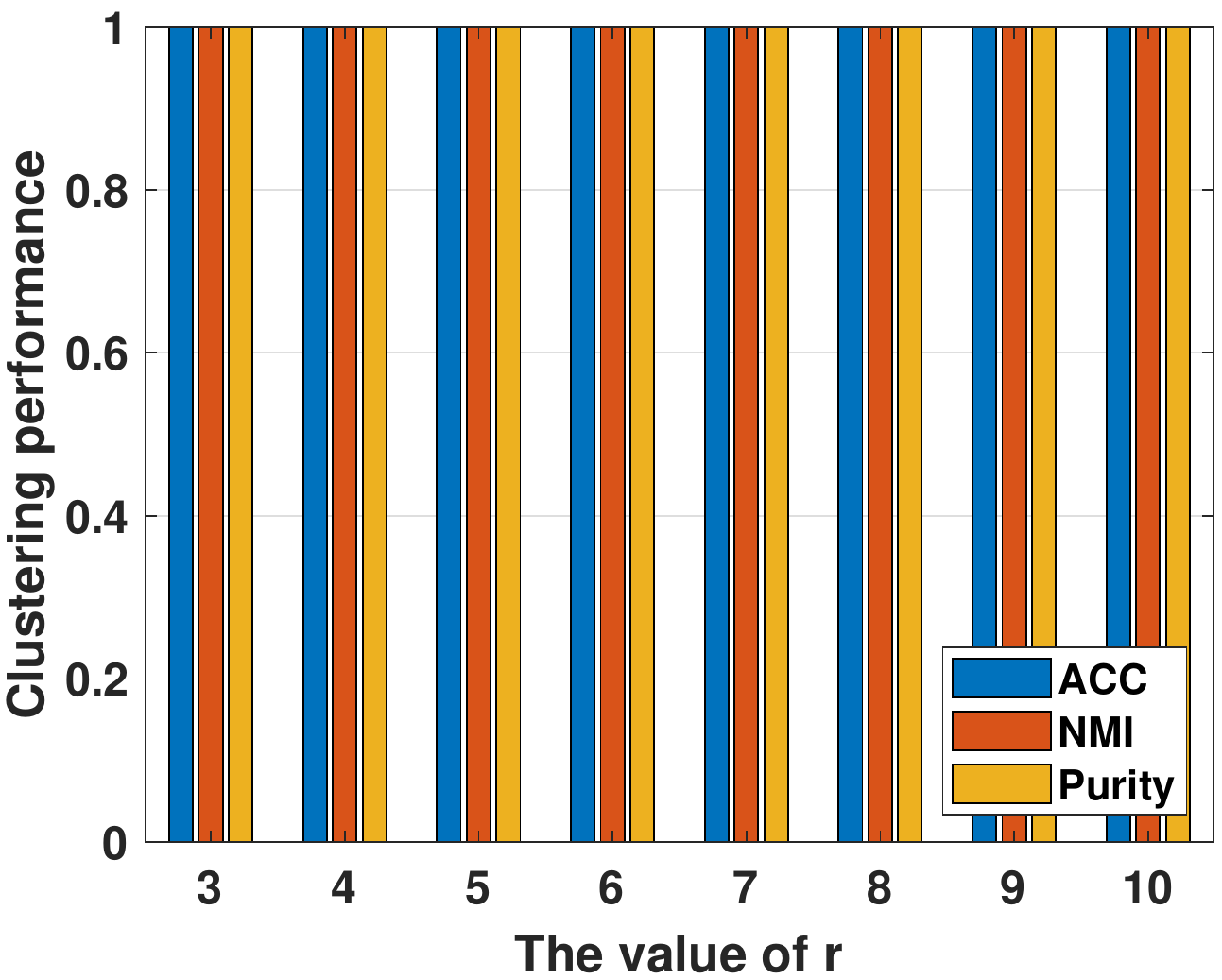}
		\end{minipage}
	}
	\subfigure[HW]{
		\begin{minipage}[t]{0.47\linewidth}
			\centering
			\includegraphics[width=1.0\linewidth]{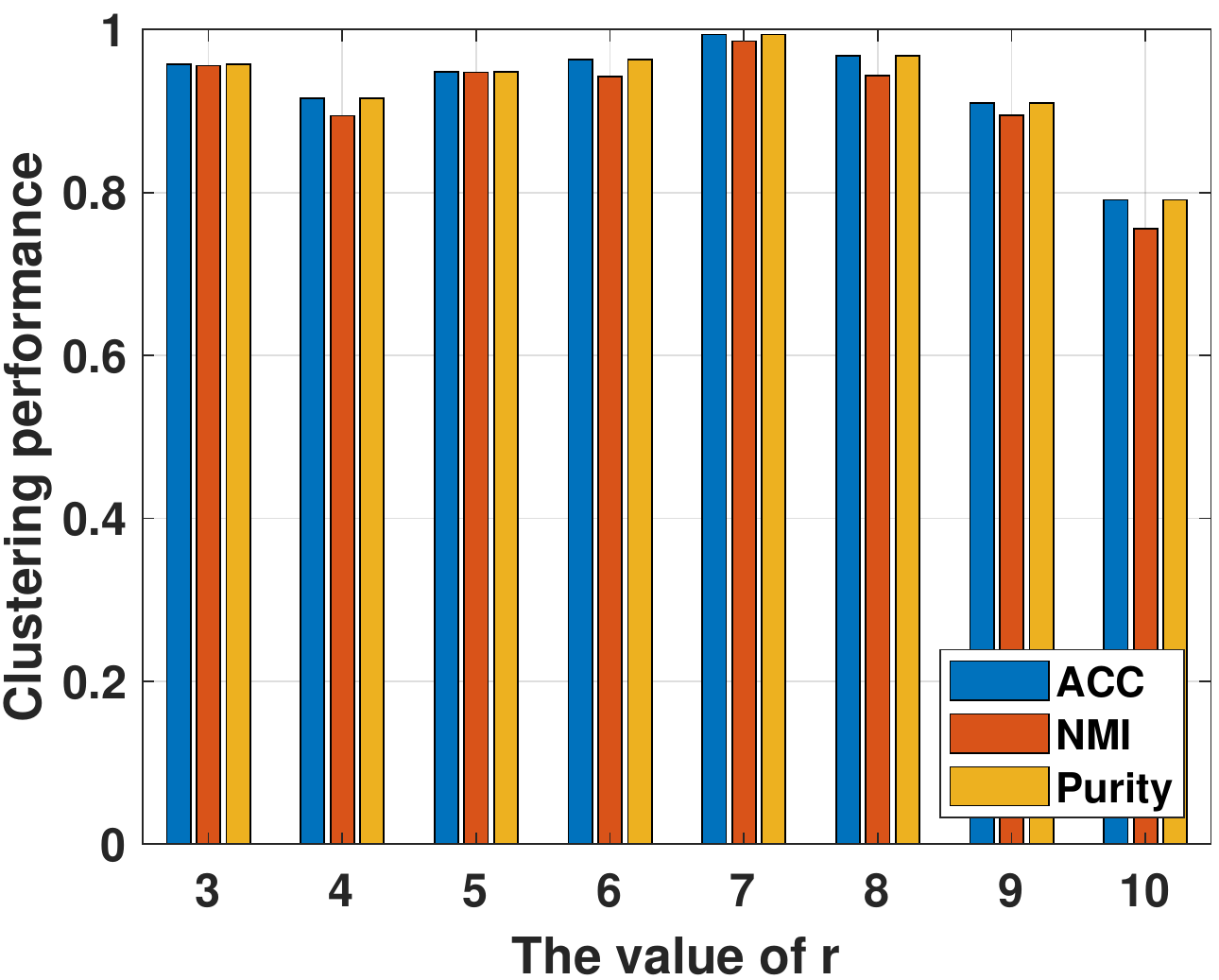}
		\end{minipage}
	}
	\subfigure[Mnist4]{
		\begin{minipage}[t]{0.47\linewidth}
			\centering
			\includegraphics[width=1.0\linewidth]{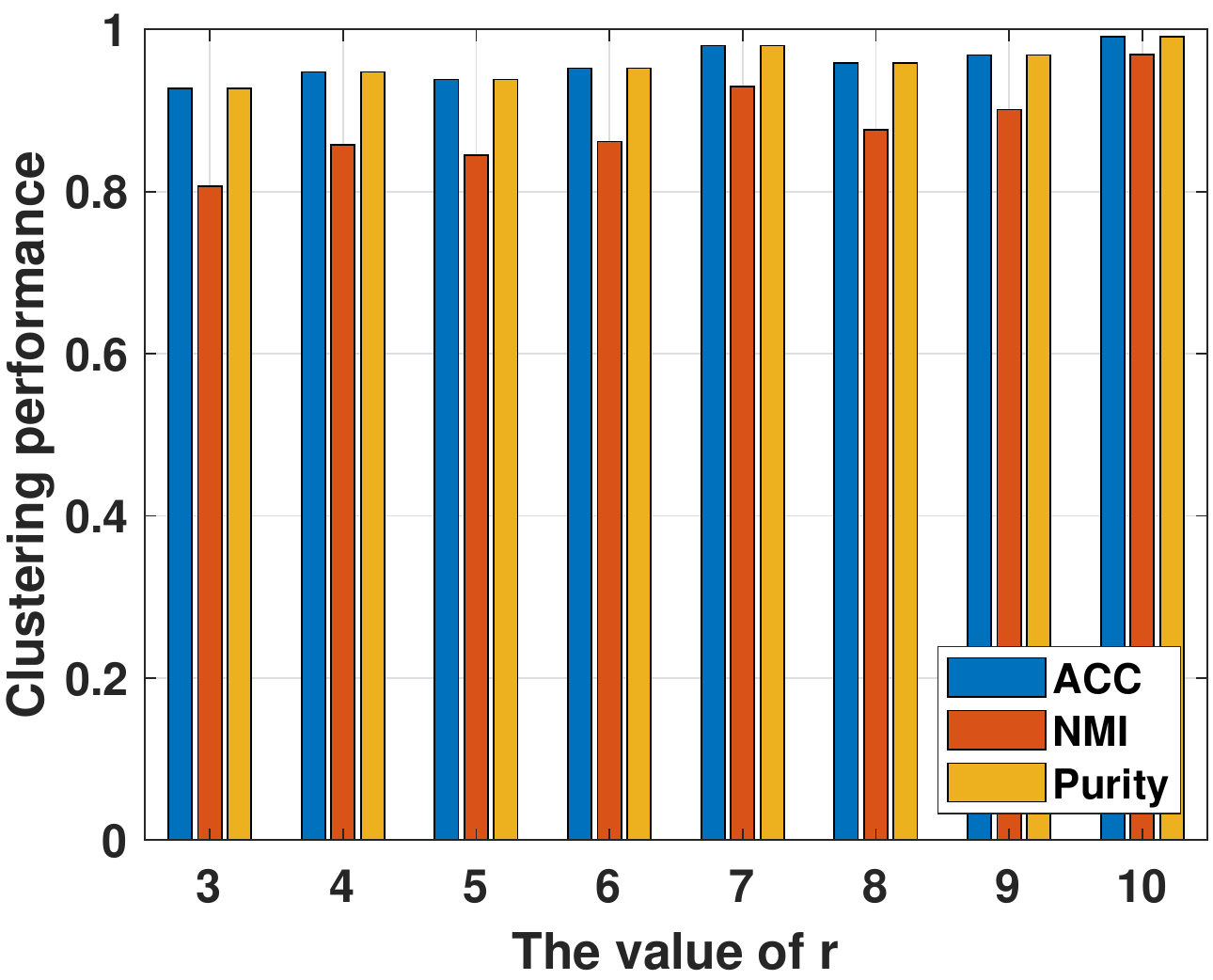}
		\end{minipage}
	}
	\centering
	\caption{Clustering performance vs. $r$ on four datasets.}
	\label{result_R}
\end{figure}

\begin{figure}[!t]
	\centering
	\subfigure[MSRC]{
		\begin{minipage}[t]{0.47\linewidth}
			\centering
			\includegraphics[width=1.0\linewidth]{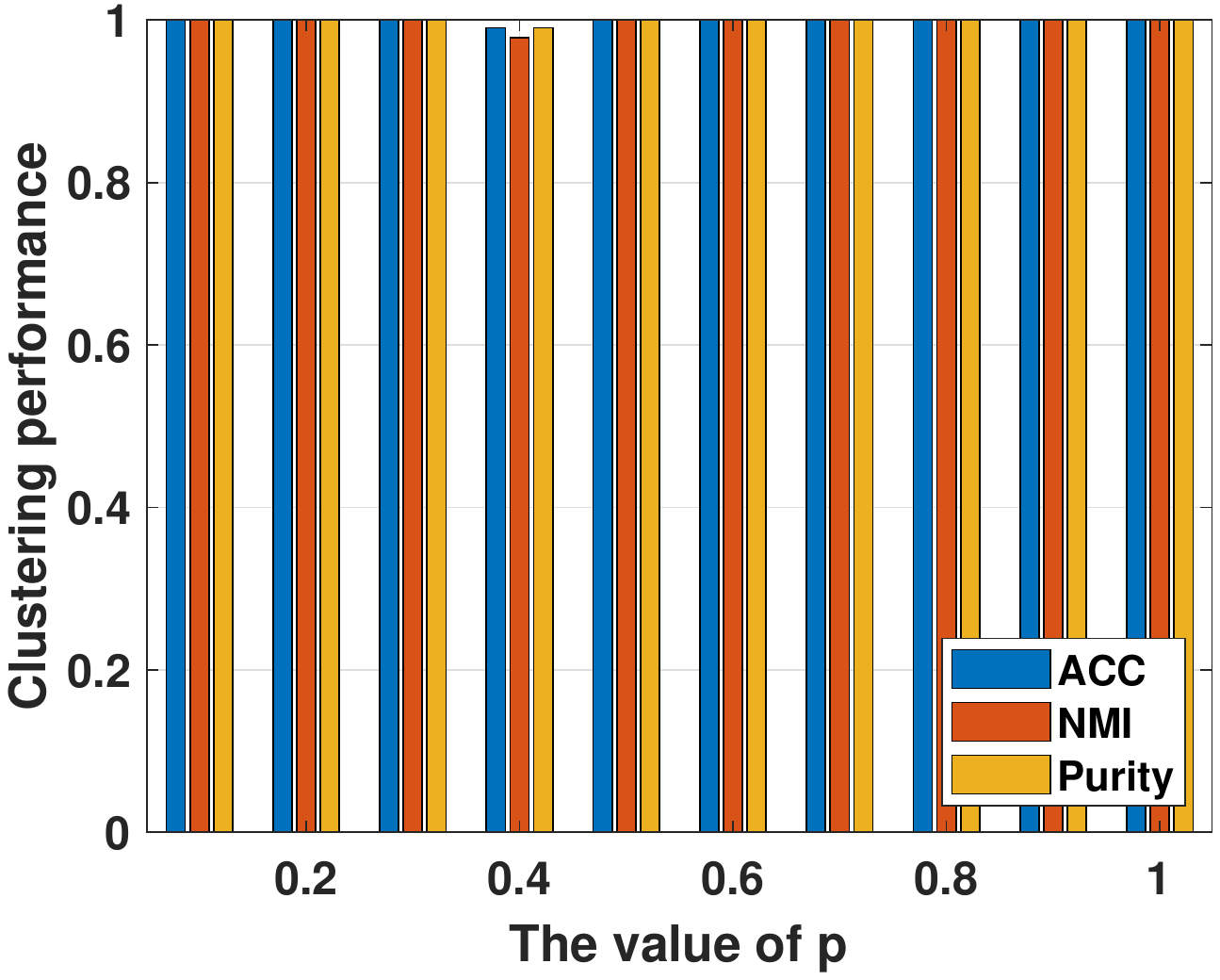}
		\end{minipage}
	}
	\subfigure[ORL]{
		\begin{minipage}[t]{0.47\linewidth}
			\centering
			\includegraphics[width=1.0\linewidth]{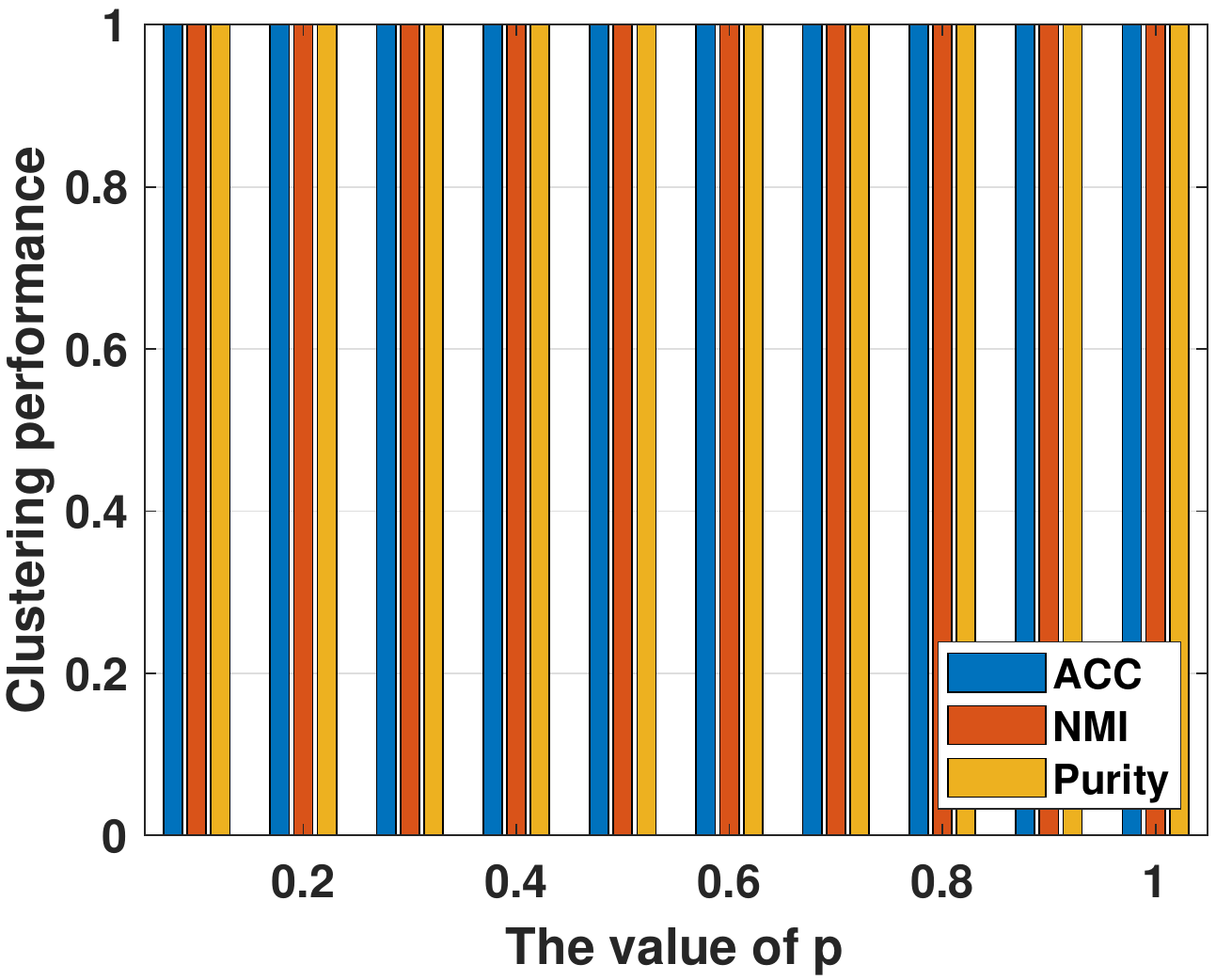}
		\end{minipage}
	}
	\subfigure[HW]{
		\begin{minipage}[t]{0.47\linewidth}
			\centering
			\includegraphics[width=1.0\linewidth]{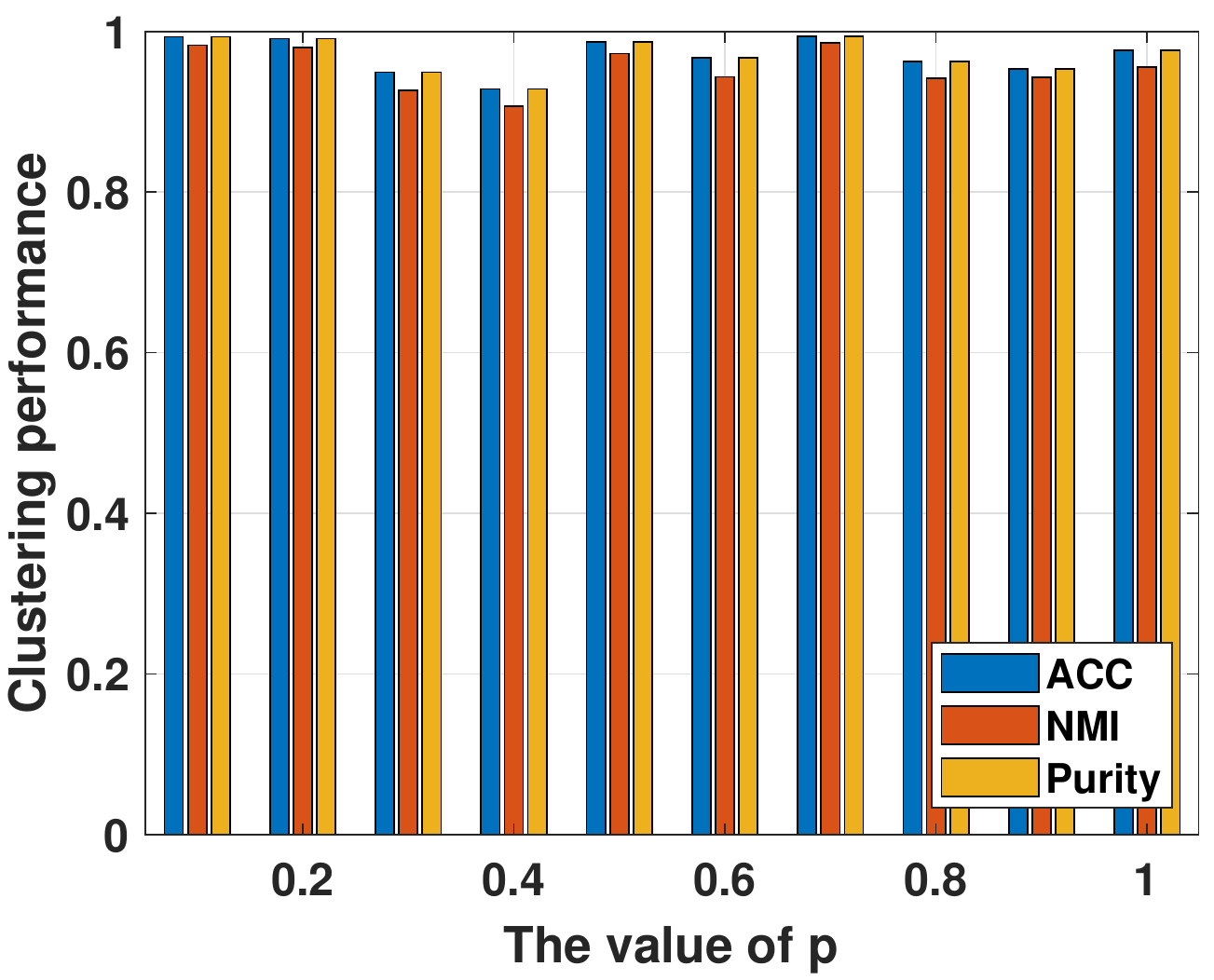}
		\end{minipage}
	}
	\subfigure[Mnist4]{
		\begin{minipage}[t]{0.47\linewidth}
			\centering
			\includegraphics[width=1.0\linewidth]{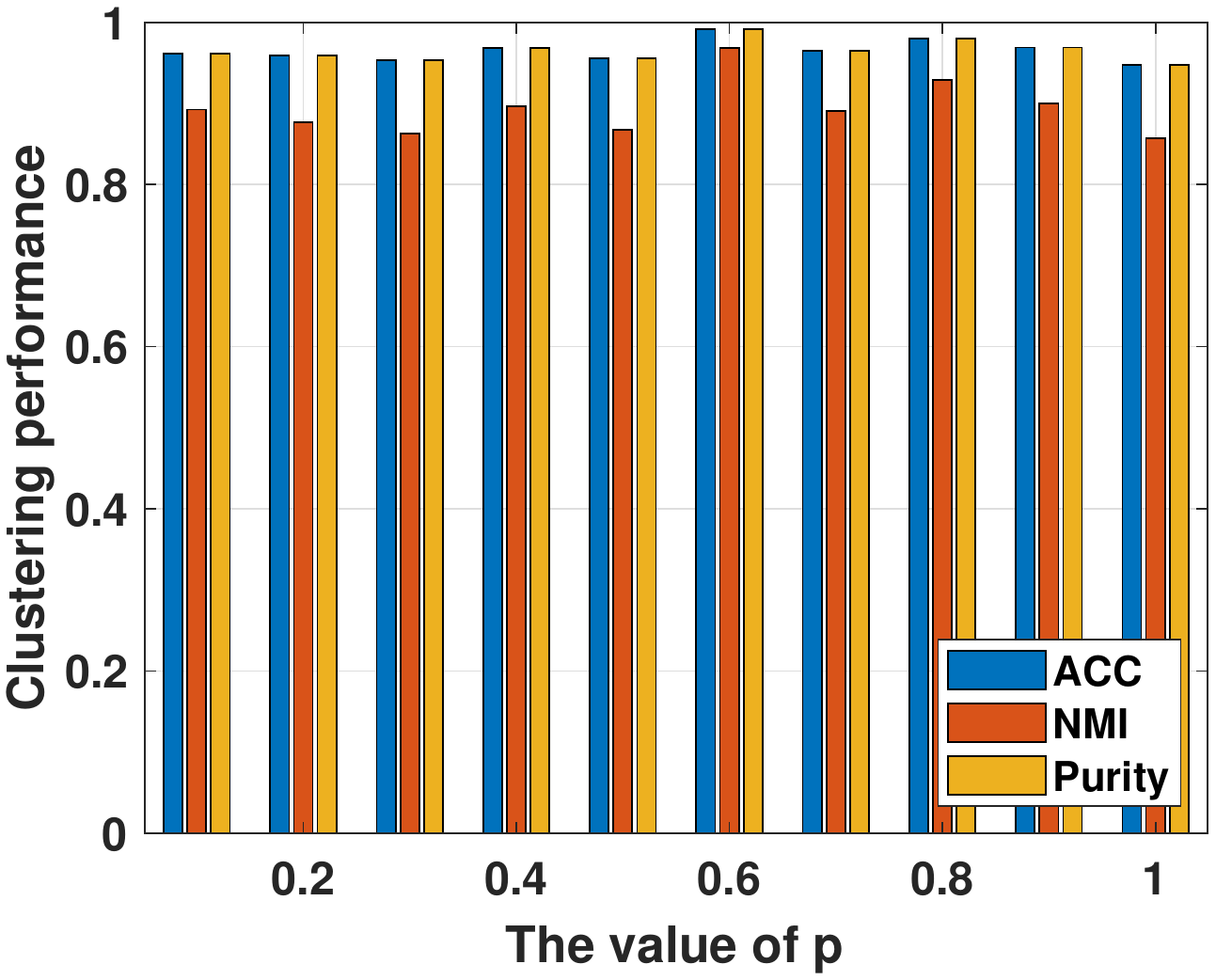}
		\end{minipage}
	}
	\centering
	\caption{Clustering performance vs. $p$ on four datasets.}
	\label{result_P}
\end{figure}

\begin{figure}[!t]
	\centering
	\subfigure[MSRC]{
		\includegraphics[width=0.47\linewidth]{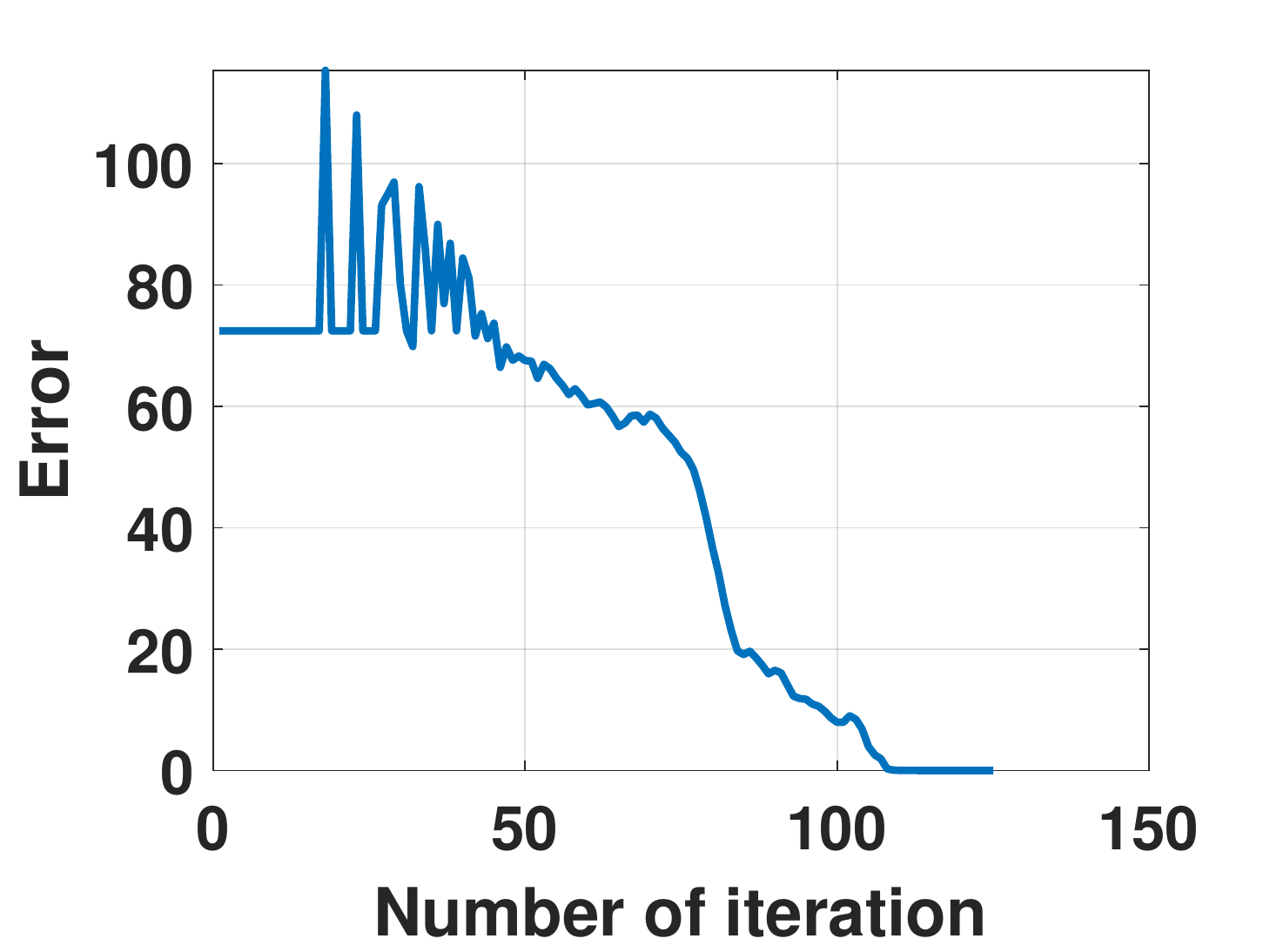}
	}
	\subfigure[ORL]{
		\includegraphics[width=0.47\linewidth]{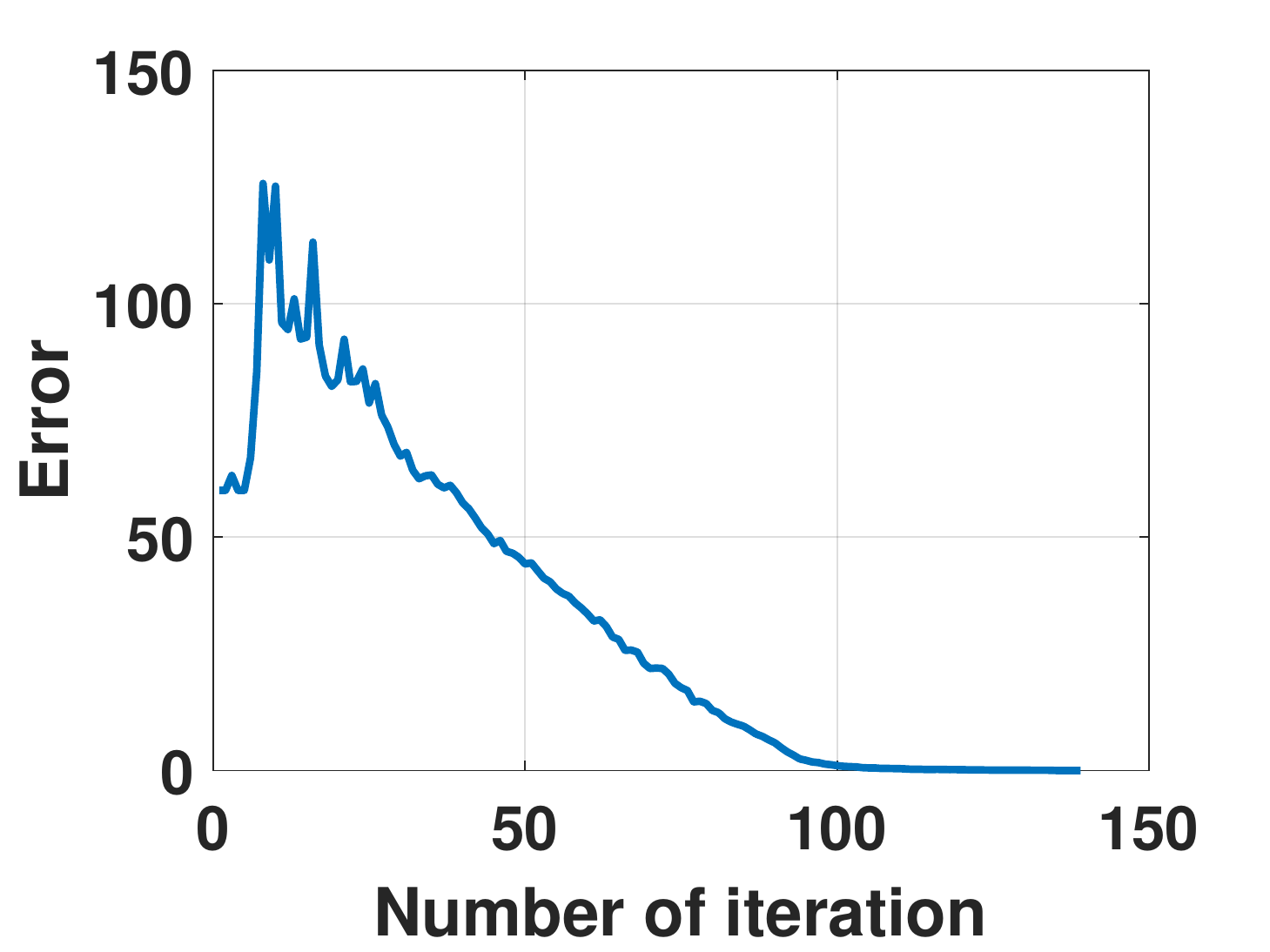}
	}
	\caption{The values of error with iterations on MSRC and ORL datasets }
	\label{result_iter}
\end{figure}

\subsubsection{Effect of parameter $r$}
In (\ref{alpha_opt}), if $r\rightarrow \infty$, the weight assigned to each view will be equal, and if $r\rightarrow 1$, the weight of the view with the minimum value of $\mathbf{M}_v$ will be 1, and others will be 0. The strategy of using $r$ not only avoids the trivial solution to the weight distribution of the different views but also controls the whole weights by exploiting one parameter.

Fig.~\ref{result_R} reveals the performances of our method on MSRC, ORL, HW, and Mnist4 datasets by showing three metrics (ACC, NMI, Purity) at different $r$, and we set $r$ from $3$ to $10$. It can be observed that when $r$ is in the range of $3$ to $9$, the clustering performance changes little on the four datasets, and the results on MSRC and HW datasets decrease when $r$ is equal to 10. This confirms that parameter $r$ plays an important role in our model.

\subsubsection{Effect of parameter $p$}
Taking MSRC, ORL, HW, and Mnist4 datasets as examples, we analyze the effect of $p$ on clustering. Specifically, we change $p$ from 0.1 to 1 with the interval of 0.1, then we report the ACC, NMI, and Purity in Fig.~\ref{result_P}. It is observed that the results under different $p$ are distinguished on HW and Mnist4 datasets, This demonstrates that the addition of $p$ has an effect on improving the clustering performance.

\begin{figure}[!t]
	\subfigure[NUS-WIDE]{
		\includegraphics[width=0.47\linewidth]{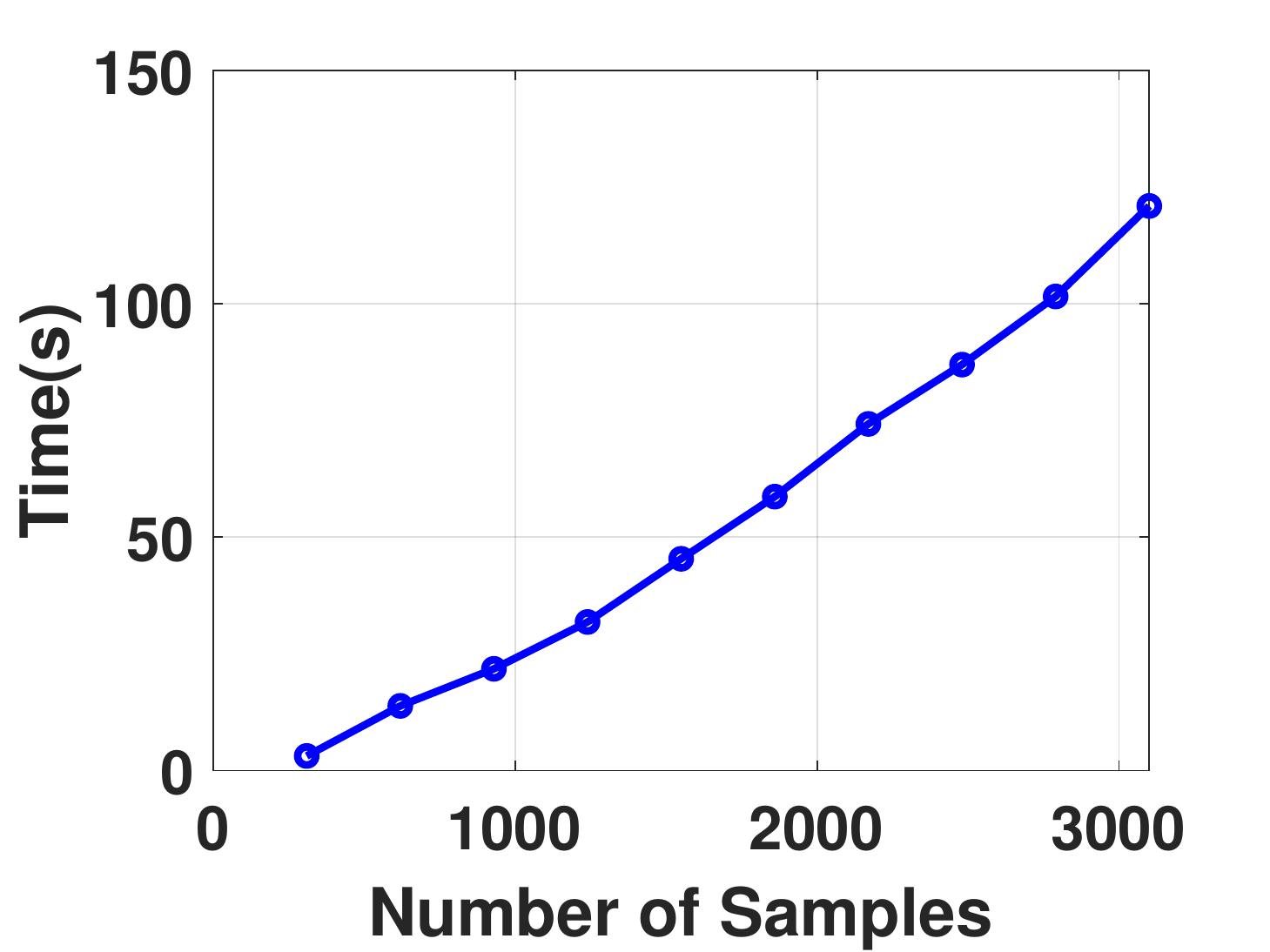}
	}
	\subfigure[Reuters]{
		\includegraphics[width=0.47\linewidth]{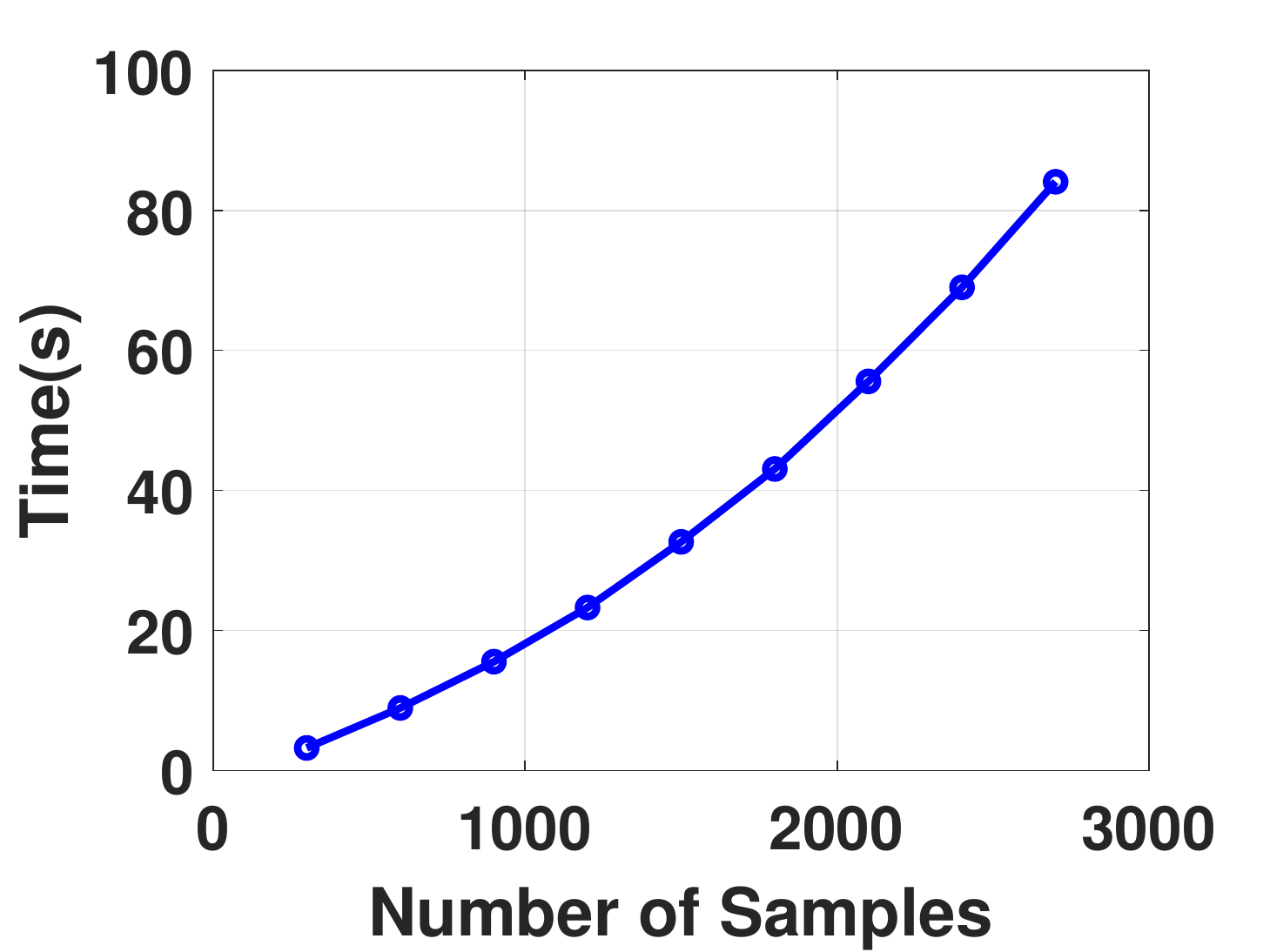}
	}
	\centering
	\caption{Running time vs. number of samples for the same number of iterations on NUS-WIDE and Reuters Datasets}
	\label{linear_verify}
\end{figure}

\subsubsection{Complexity Verification}
On the NUS-WIDE and Reuters datasets, we change the number of samples while keeping the number of iterations constant and recorded the time taken to run the MATLAB code, as shown in Figure~\ref{linear_verify}. It can be observed that the running time is approximately proportional to the number of samples, providing evidence that the computational complexity of our model scales linearly with N.

\subsubsection{Convergence}
The convergence of our method on the MSRC and ORL datasets are shown in this section. We calculate the error of $\mathbf{J}^{(v)}$ and $\mathbf{Y}^{(v)}$ from Algorithm~\ref{A1} at each number of iterations by the Frobenius norm, i.e. $Error = \sum_{i=1}^{V} {\|\mathbf{J}^{(v)} - \mathbf{Y}^{(v)} \|_F^2 }$, and record the results in Figure~\ref{result_iter}. It is observed that the value of the error decreases as the number of iterations increases and approaches 0 in less than 150 iterations. This further indicates that our proposed algorithm can converge in practical applications.

\section{Conclusion}\label{conclusion}
In this paper, we rethink \emph{k}-means from manifold learning perspective and propose a novel distance-based discrete multi-view clustering which directly achieves clusters of data without centroid estimation. This builds a bridge between manifold learning and clustering. To improve clustering performance, we employ anchor graph and Butterworth filter to construct distance matrix. It helps increase the distance between similar and dissimilar samples, while reducing the distance between similar samples. To well exploit the complementary information embedded in different views, we leverage the tensor Schatten \emph{p}-norm regularization on the 3rd-order tensor which consists of indicator matrices of different views. Extensive experimental results indicate the superiority of our proposed method.


\ifCLASSOPTIONcompsoc
\else
\fi
\ifCLASSOPTIONcaptionsoff
  \newpage
\fi

{\small
\bibliographystyle{IEEEtran}
\bibliography{egbib}
}

\end{document}